\documentclass[runningheads]{llncs}
\usepackage[T1]{fontenc}
\usepackage{graphicx}
\usepackage{thmtools}
\usepackage{thm-restate}
\usepackage{amsmath}
\usepackage{mathtools}      
\usepackage{algorithm}      
\usepackage{algpseudocode}  
\usepackage{hyperref}
\usepackage{cleveref}
\usepackage{stmaryrd}
\usepackage{tikz}          
\usetikzlibrary{patterns}  
\usepackage{booktabs}      
\usepackage{subcaption}
\usepackage{multirow}
\usepackage{todonotes}
\usepackage{amsmath,amssymb,mathtools}
\usepackage{microtype}
\usepackage{booktabs}
\usepackage{hyperref}

\hypersetup{
  colorlinks=true,
  linkcolor=blue,
  citecolor=blue,
  urlcolor=blue
}

\usepackage{algorithm}
\usepackage{bm}

\usepackage{algorithm}
\usepackage[
  noEnd=true,
  indLines=true,
  italicComments=false
]{algpseudocodex}

\tikzset{
  algpxIndentLine/.style={draw=black!35, very thin}
}

\newcommand{\ZN}{\mathbb{Z}}
\newcommand{\RN}{\mathbb{R}}
\newcommand{\NN}{\mathbb{N}}
\newcommand{\pNN}{\mathbb{N}_+}

\newcommand{\gG}{\mathfrak{G}}

\newcommand{\dT}{\mathsf{T}}

\newcommand{\aA}{\mathcal{A}}

\newcommand{\aC}{\mathcal{C}}

\newcommand{\aF}{\mathcal{F}}

\newcommand{\aR}{\mathcal{R}}
\newcommand{\aS}{\mathcal{S}}

\newcommand{\aU}{\mathcal{U}}

\newcommand{\aW}{\mathcal{W}}

\newcommand{\sem}[1]{\llbracket   #1 \rrbracket}

 \newcommand{\atoms}{\mathrm{atoms}}
  \newcommand{\ord}{\mathrm{ord}}

  \newcommand{\Predop}{\mathrm{Pred}}

\newcommand{\temp}{\gamma}

\newcommand{\sinit}{s_{\mathrm{init}}}   

\begin{document}
%

\title{Shielding for Higher-Order Safety}
%
%
\author{Filip Cano \and
Thomas A. Henzinger \and
Konstantin Kueffner}
%
\authorrunning{F. Cano, T. A. Henzinger, and K. Kueffner}
%
\institute{Institute of Science and Technology Austria, 3400 Klosterneuburg, Austria \\
\email{\{filip.cano, tah, konstantin.kueffner\}@ist.ac.at}}
%
\makeatletter
\newcommand{\algline}[1]{%
  \begingroup
  \edef\@currentlabel{\theALG@line}%
  \label{#1}%
  \endgroup
}
\makeatother

\maketitle              
\begin{abstract}
Safety shields are runtime enforcement mechanisms that restrict the actions of a controller to guarantee safety. Classical shields are usually synthesized for state predicates: the current physical state is either safe or unsafe, and the shield disables precisely those actions that can force the system into an unsafe state in the future. In many cyber-physical applications this view is too coarse. A vehicle approaching an obstacle should not only avoid collision, but also respect speed regulations, force limits induced by acceleration, and jerk limits to prevent injuries. From a physical perspective, these requirements are predicated over the derivatives of the state. This paper develops a finite-state safety-game construction for such high-order smoothness constraints. We define differential safety properties using finite differences over a discretized state space, characterise their expressiveness, and reduce shield synthesis to an ordinary safety game over a history state space. We give a synthesis algorithm whose shields store exactly $k$ past states for properties of order $k$ and prove that this memory is necessary.
We describe an iterative synthesis procedure for a maximally permissive shield that operates over hierarchies of derivative constraints. The algorithm solves constraints iteratively in increasing order and uses the solution at each iteration to prune the state space for the next constraint.
This makes shield synthesis more efficient in practice, as the algorithm refrains from exploring large regions of the state space that are known to be unsafe. 
\keywords{Shield synthesis \and Safety games \and Runtime enforcement \and Smoothness constraints.}
\end{abstract}
\section{Introduction}

It is rarely the fall that kills you; it is the strong forces from the deceleration at the end. 
This principle applies, in less dramatic ways, for cyber-physical systems, in which apparently safe states may be doomed by the dynamics of the system. A car one meter from a wall is safe if it is parked, and lethal at high speeds. Safety is often a predicate over states representing positions in space, but also over derivatives of several orders: speed regulations constrain the first derivative of the position, force limits that a material can handle constrain the second, and comfort and injury thresholds constrain the third (jerk) and fourth (snap) derivatives~\cite{Eager_2016,Pendrill_2020}. 
In this paper, we study how to systematically define and enforce safety constraints over properties defined through discrete derivatives of positional states. We call these \emph{differential safety properties}.


A shield is a runtime enforcement mechanism that can restrict or overwrite the actions proposed by an agent, e.g., a learned policy, with the goal of enforcing a specification while interfering as little as possible with the agent.
A popular approach to shield synthesis is to model the interaction between the agent and its environment as a turn-based game and to compute a maximally permissive winning strategy: whenever such a strategy exists, it implements the minimally interfering shield \cite{bloem2015shield,alshiekh2018safe}. 
Classical shields, however, are synthesized for state predicates, and to include properties like speed and acceleration into their specification, these need to be manually modelled. 
This paper extends game-based shield synthesis to \emph{differential safety properties}. 
The price of this expressiveness is memory: an order-$k$ constraint is a predicate over windows of $k+1$ consecutive states, so it can be naturally expressed -- and the corresponding shield synthesized -- in a safety game over $(k+1)$-tuples of states. We study this reduction, and show that shields can be synthesised using only state histories of size $k$.

This direct synthesis method makes no assumption about the structure of the differential safety property. However, in many cases, a differential safety property is naturally expressed as a conjunction of properties of increasing orders. 
For example, a car may not surpass a certain speed limit (order $k=1$), and at the same time may not experience certain accelerations ($k=2$), jerks ($k=3$), and snaps ($k=4$). We observe that when building a shield, we do not need to care about the acceleration induced in sequences of states that contain unsafe speeds, as these states are, by definition, not part of the safety winning region. 
In general, if a $(k+1)$-tuple of states is already unsafe for a property of order $k$, it already signals that any $(k+2)$-tuples containing it will be unsafe for the property of order $k+1$. In Section~\ref{sec:hierarchical-safety}, we exploit this principle to propose a more efficient synthesis algorithm for these types of properties.


\begin{enumerate}
    \item We introduce differential safety properties, a language for constraints
    over discrete derivatives, and show that order-$k$ properties are exactly
    predicates over windows of $k+1$ states.
    \item We give a baseline synthesis procedure by reducing order-$k$
    differential safety games to ordinary safety games over $(k+1)$-state
    histories.
    \item We introduce direct synthesis, which computes the same maximally
    permissive shield using only $k$-state histories, and prove that this
    memory bound is tight.
    \item We introduce iterative synthesis for hierarchical differential safety
    properties, solving derivative levels in increasing order and pruning losing
    windows between levels.
    \item We evaluate the three synthesis procedures experimentally and show
    that direct and iterative synthesis substantially reduce synthesis cost
    compared with the baseline.
\end{enumerate}

\section{Preliminaries}
In this section, we introduce the well-known formal concepts necessary in the formalization of our differential shielding setting.

\paragraph{Turn-based games.}
A turn-based game is played over a finite game graph $ \gG=(\aS,\aA,\dT, \aF, \sinit)$,
where $\aS$ is a finite set of states, $\aA$ is a finite, non-empty set of actions, $\dT\colon \aS\times \aA \to 2^\aS$ is a non-deterministic transition relation, where every state-action pair has at least one successor, $\aF\subseteq \aS^{\omega}$ is the winning condition, and  $\sinit\in \aS$ is an initial state.
The game is played between a \emph{controller}, who in each turn picks an action, and an \emph{environment}, who resolves the nondeterminism of the transition relation by picking a successor state.
Given $s,s'\in \aS$ and $a\in \aA$, we
use $s\xrightarrow{a} s'$ to denote the transition from $s$ to $s'$ with action $a$, i.e., $s'\in \dT(s,a)$, and $s\to s'$ to denote the transition from $s$ to $s'$ with some action, i.e.,  $\exists a\in \aA$ such that $s\xrightarrow{a}s'$.
We write $\aS^*$ for the set of finite state sequences, $\aS^\omega$ for the set of infinite state sequences, and $\aS^\infty \coloneqq \aS^* \cup \aS^\omega$ for the set of all state sequences. For a sequence $w=s_1s_2\cdots\in \aS^\infty$ and $m,n\in \pNN$ with $m\leq n$ we denote $w_{m:n}\coloneqq s_{\min\{m, |w|\}}, \dots, s_{\min\{n, |w|\}}$ as the infix of $w$ that is contained between $m$ and $n$. A path is an infinite sequence of states $w=s_1s_2s_3\cdots \in \aS^\omega$ such that $s_t\to s_{t+1}$ for all $t\in \pNN$.
A path $w$ is \emph{winning} if $w\in \aF$.

\paragraph{Strategy.}
A strategy is a function $\xi\colon \aS^*\to 2^{\aA}$ that maps every history to a set of permitted actions.
A finite or infinite sequence $w=s_1s_2s_3\cdots \in \aS^\infty$ is consistent with $\xi$ if for all $t<|w|$ there exists $a\in\xi(w_{1:t})$ such that $s_t\xrightarrow{a}s_{t+1}$.
A strategy $\xi$ is winning if $\xi(w)\neq\emptyset$ for every finite sequence $w$ starting in $\sinit$ that is consistent with $\xi$, and every path starting from $\sinit$ consistent with $\xi$ is winning. In particular, a winning strategy always permits at least one action along every play it allows.
A strategy $\xi$ subsumes another strategy $\xi'$, denoted by $\xi'\leq \xi$, if every path consistent with $\xi'$ is also consistent with $\xi$.
A winning strategy $\xi$ is maximally permissive, if it subsumes every winning strategy.
A strategy $\xi$ is \emph{$k$-history dependent} if it depends only on the last $k$ states of the history, and thus can be written as  $\xi\colon \aS^{\leq k} \to 2^\aA$. When $k=1$, we say the strategy is memoryless.

\paragraph{Shields.}
A shield is a runtime enforcement mechanism that restricts or corrects the
actions proposed by a controller in order to enforce the winning condition (see Appendix~\ref{sec:app:definitions} for a formal definition). 
In game-based shielding, a sound shield is obtained from a winning strategy:
at every history, the shield permits only actions that keep the play within the
winning region. A maximally permissive winning strategy is minimally
interfering, because it disables an action exactly when allowing it would make
it impossible to guarantee safety~\cite{bloem2015shield}. 
Hence, we focus on computing maximally permissive winning strategies.

\section{Differential Safety Properties}
\label{sec:dsp}
Conditions on derivatives arise naturally in continuous time and continuous state systems, e.g., to control the curvature of a curve, the limit speed of a car, or to enforce a bounded Lipschitz constant of a dynamical system.
In this section, we define a language to specify derivative constraints for discrete time and discrete state systems, e.g., a discrete abstraction of a continuous system.

\subsection{Discrete Derivatives}
In many applications, the state space is a discretization of some $d$-dimensional set of real numbers, i.e., $\aS\subseteq \RN^d$.
In this setting, a path can be viewed as a discretization of a continuous curve in $\RN^d$.
Hence, the smoothness of the path can be quantified by discrete analogues of the derivatives of the underlying curve.
We fix a time step $\Delta\tau>0$.
We always consider $\aS$ to be finite.

\begin{definition}[Discrete derivative]
Let $w=s_1s_2\cdots\in \aS^\infty$ be a state sequence. For $k\in\NN$,
the $k^{\text{th}}$ discrete derivative of $w$ at index $t>k$ is defined
recursively by
\begin{align}
\partial_t^0(w) \coloneqq s_t,
\qquad \text{and}\qquad
\partial_t^k(w)
\coloneqq
\frac{\partial_t^{k-1}(w)-\partial_{t-1}^{k-1}(w)}{\Delta\tau}.
\end{align}
\end{definition}

\begin{example}
\label{ex:car1}
Consider a car moving on a discretized one-dimensional road
$\aS\subseteq [a,b]\cap\ZN$ between two walls $a$ and $b$, as shown in
Fig.~\ref{fig:car}. We set $\Delta\tau=1$. For a trajectory
$w=s_1s_2\cdots$, position, velocity, acceleration, and jerk are given by
\begin{align*}
\partial_t^0(w)&=s_t,
&
\partial_t^1(w)&=\frac{s_t-s_{t-1}}{\Delta \tau},\\
\partial_t^2(w)&=\frac{s_t-2s_{t-1}+s_{t-2}}{(\Delta \tau)^2},
&
\partial_t^3(w)&=\frac{s_t-3s_{t-1}+3s_{t-2}-s_{t-3}}{(\Delta \tau)^3}.
\end{align*}
\end{example}


\begin{figure}[t]
\centering
\begin{tikzpicture}[xscale=0.74,yscale=0.74]
  \fill[black!10] (10.5,-0.4) rectangle (14,0.4);
  \node[font=\scriptsize, align=center] at (11.6,-1.35) {individually safe, but lost at\\order $2$ for inbound speed $v$};
  \draw (0,0) -- (14,0);
  \foreach \x in {1,2,...,13} \fill (\x,0) circle (1.4pt);
  \fill[pattern=north east lines] (-0.45,-0.4) rectangle (0,0.95);
  \fill[pattern=north east lines] (14,-0.4) rectangle (14.45,0.95);
  \draw[thick] (0,-0.4) -- (0,0.95);
  \draw[thick] (14,-0.4) -- (14,0.95);
  \node[below,font=\small] at (0,-0.42) {$a$};
  \node[below,font=\small] at (14,-0.42) {$b$};
  \draw[fill=white] (3,0) circle (2.3pt);
  \node[below,font=\scriptsize] at (3,-0.12) {$s_{t-3}$};
  \draw[fill=white] (5,0) circle (2.3pt);
  \node[below,font=\scriptsize] at (5,-0.12) {$s_{t-2}$};
  \draw[fill=white] (8,0) circle (2.3pt);
  \node[below,font=\scriptsize] at (8,-0.12) {$s_{t-1}$};
  \draw[fill=black!20,rounded corners=1pt] (10.45,0.18) rectangle (11.55,0.64);
  \fill (10.75,0.18) circle (2.1pt);
  \fill (11.25,0.18) circle (2.1pt);
  \fill (11,0) circle (2.3pt);
  \node[below,font=\scriptsize] at (11,-0.12) {$s_{t}$};
  \draw[->,thick] (11.8,0.42) -- (13.1,0.42);
  \node[above,font=\scriptsize] at (12.45,0.46) {$\partial^1_t$};
\end{tikzpicture}
\caption{The car of Example~\ref{ex:car1} on a discretized road between two walls $a$ and $b$. The last $k+1$ positions $s_{t-k},\dots,s_t$ form the window on which position ($\partial^0$), velocity ($\partial^1$), acceleration ($\partial^2$), and jerk ($\partial^3$) are evaluated. The shaded positions are individually safe, but every window entering them at high inbound speed is losing once acceleration is bounded: the car cannot brake in time (cf. Ex.~\ref{ex:hierarchical-car-iterative}).}
\label{fig:car}
\end{figure}

Note that discrete derivatives are backward differences, using only past and current states. In general, discrete derivatives permit a closed-form expression. 

\begin{restatable}{lemma}{derivativeClosedForm}
\label{lem:closed-form}
The $k^{\text{th}}$ derivative satisfies the general formula
\begin{align*}
\partial_t^k(w)
=\frac{1}{(\Delta \tau)^k}\sum_{i=0}^{k}(-1)^i\binom{k}{i}s_{t-i} \quad \text{ if $k<t\leq |w|$, and undefined otherwise.}
\end{align*}
\end{restatable}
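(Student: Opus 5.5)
We argue by induction on $k$, proving the formula together with the domain claim: $\partial_t^k(w)$ is defined exactly when $k<t\le |w|$. For the base case $k=0$, the right-hand side is $\frac{1}{(\Delta\tau)^0}\binom{0}{0}s_t=s_t=\partial_t^0(w)$. This is defined precisely when $s_t$ exists, i.e.\ $0<t\le|w|$.

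For the inductive step, fix $k\ge1$ and assume the formula for $k-1$ at every admissible index. By the recursive definition, $\partial_t^k(w)$ requires both $\partial_t^{k-1}(w)$ and $\partial_{t-1}^{k-1}(w)$. By the induction hypothesis these are defined iff $k-1<t\le|w|$ and $k-1<t-1\le|w|$, and together these conditions are equivalent to $k<t\le|w|$. This settles the domain part. Substituting the two closed forms and pulling out the common factor $1/(\Delta\tau)^{k-1}$ gives
\begin{align*}
\partial_t^k(w)=\frac{1}{(\Delta\tau)^k}\Bigl(\sum_{i=0}^{k-1}(-1)^i\binom{k-1}{i}s_{t-i}-\sum_{j=0}^{k-1}(-1)^j\binom{k-1}{j}s_{t-1-j}\Bigr).
\end{align*}

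The main step is recombining the two sums. Reindex the second sum by $i=j+1$, so it becomes $\sum_{i=1}^{k}(-1)^{i}\binom{k-1}{i-1}s_{t-i}$; the leading minus sign is absorbed because $-(-1)^{i-1}=(-1)^i$. Adopt the convention $\binom{k-1}{-1}=\binom{k-1}{k}=0$, so both sums run over $i=0,\dots,k$. The coefficient of $s_{t-i}$ is then $(-1)^i\bigl(\binom{k-1}{i}+\binom{k-1}{i-1}\bigr)$, which equals $(-1)^i\binom{k}{i}$ by Pascal's rule. The boundary terms $i=0$ and $i=k$ are consistent with the convention, since $\binom{k}{0}=\binom{k}{k}=1$. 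This completes the induction.

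The only places needing care are the sign bookkeeping in the reindexing and the boundary terms handled by the convention. The computation is coordinatewise linear, so it holds verbatim for $\aS\subseteq\RN^d$.
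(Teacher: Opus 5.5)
Your proof is correct and follows the paper's argument essentially verbatim: induction on $k$, substitution of the two closed forms, an index shift in the second sum, and Pascal's rule with the convention $\binom{k-1}{-1}=\binom{k-1}{k}=0$. The only minor difference is that you carry the domain condition as an ``iff'' inside the induction hypothesis. The paper instead checks definedness in the forward direction and argues the ``undefined otherwise'' part separately, noting that the recursion would reach $\partial^0_{t-k}(w)$ with $t-k\le 0$ or states beyond $w$; your version is, if anything, slightly tidier.
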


\subsection{Differential Safety Properties}
Classical safety properties, e.g., defined using LTL, partition the states into a set of safe (or good) states.
We extend this to discrete derivatives
and introduce a specification language for differential safety properties.

\paragraph{Syntax.}
We define differential safety properties syntactically as Boolean expressions over atoms, i.e., expressions $\partial^k \in \aR$ that restrict the derivative of some order $k$ to some safe set $\aR$. Formally, the language is given by the grammar:
\begin{align*}
\varphi \Coloneqq  (\partial^k \in \aR)  \; \text{where} \;  k \in \NN  , \,\aR \subseteq \RN^d \; \mid \; \varphi \land \varphi \; \mid \; \lnot \varphi.
\end{align*}
The order of a differential safety property $\varphi$ is given by the degree of its largest discrete derivative, i.e., $\ord(\varphi)\coloneqq\max\{ k \mid (\partial^k \in \aR)  \in \atoms(\varphi)\}$ where $\atoms(\varphi)$ denotes the set of all atoms in $\varphi$.
Since differential safety properties are closed under Boolean combinations, they can express conditional requirements, e.g., that a car which is close to a wall and still moving towards it must brake sufficiently.

\paragraph{Semantics.}
We consider a local and a global interpretation.
Locally, an order $k$ atom $\partial^k \in \aR$ is interpreted over a sequence $w\in \aS^\infty$ at index $k+1\leq t \leq |w|$ as
\begin{align*}
\sem{\partial^k \in \aR}_t(w) \coloneqq \partial_t^k(w)\in \aR,
\end{align*}
and is extended to Boolean expressions as usual. Moreover, for the complexity statements, we assume that the safe sets $\aR$ are represented such that membership is decided in constant time, e.g., as intervals or finite unions of boxes.
Globally, an expression $\varphi$ of order $k$ is interpreted over a sequence $w\in \aS^\infty$ as
\begin{align*}
\sem{\varphi}(w)\coloneqq  \forall t\in \{k+1,\dots, |w|\} \colon \sem{\varphi}_t(w).
\end{align*}
In particular, sequences of length at most $k$ satisfy $\varphi$ vacuously, as no derivative of order $k$ can yet be evaluated.
We use $w\models\varphi$ to denote $\sem{\varphi}(w)=\top$, and
$w\not\models\varphi$ to denote $\sem{\varphi}(w)=\bot$.


\begin{example}[Ex.~\ref{ex:car1} cont.]
\label{ex:car2}
For the car of Example~\ref{ex:car1}, a joint position, velocity, and acceleration constraint
can be written as
\begin{align*}
    \varphi
    \coloneqq
    \partial^0 \in \aS\setminus\{a,b\}
    \land
    \partial^1 \in [v_{\min},v_{\max}]
    \land
    \partial^2 \in [a_{\min},a_{\max}].
\end{align*}
where the boundary points
$a,b$ are unsafe walls.
At time $3$, this requires
$s_3\in\aS\setminus\{a,b\}$,
$(s_3-s_2)/\Delta\tau\in[v_{\min},v_{\max}]$, and
$(s_3-2s_2+s_1)/(\Delta\tau)^2\in[a_{\min},a_{\max}]$.
\end{example}


\paragraph{Expressiveness.}
By Lemma~\ref{lem:closed-form}, the value of every atom of order at most $k$ is a function of the window $(s_{t-k},\dots,s_t)$ of the last $k+1$ states. The converse also holds: over a finite state space, differential safety properties of order $k$ can express \emph{every} predicate over such windows.

\begin{restatable}[Window predicates]{lemma}{windowPredicates}
\label{lem:windows}
Let $\aS\subseteq\mathbb R^d$ be a state space and $k\in\NN$.
\begin{enumerate}
\item For every $P\subseteq\aS^{k+1}$ there exists a differential safety property $\varphi_P$ of order $k$ such that, for every $w\in\aS^\infty$ and $k+1\leq t\leq|w|$, $\sem{\varphi_P}_t(w)=\top$ iff $(s_{t-k},\dots,s_t)\in P$.
\item Conversely, for every differential safety property $\varphi$ of order $k$, the value $\sem{\varphi}_t(w)$ depends only on the window $(s_{t-k},\dots,s_t)$.
\end{enumerate}
\end{restatable}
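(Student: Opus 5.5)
Part 2 is the easier direction, so I will prove it first by structural induction on $\varphi$. Lemma~\ref{lem:closed-form} shows that for an atom $\partial^j\in\aR$ with $j\le k$ and $t\ge k+1>j$, the value $\partial^j_t(w)$ is a fixed linear combination of $s_{t-j},\dots,s_t$. These states all lie inside the window $(s_{t-k},\dots,s_t)$, so the truth value of $\partial^j_t(w)\in\aR$ is determined by that window. Conjunction and negation preserve dependence on the same window, which closes the induction. Because $t\ge k+1$, every atom of $\varphi$ is defined at $t$, so the local semantics is well defined.

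For part 1 I will first construct, for each fixed window $u=(u_0,\dots,u_k)\in\aS^{k+1}$, a formula $\psi_u$ that is true at $t$ exactly when $(s_{t-k},\dots,s_t)=u$. Then I set $\varphi_P\coloneqq\bigvee_{u\in P}\psi_u$, where disjunction is the usual abbreviation via $\land$ and $\lnot$. When $P=\emptyset$ I take any order-$k$ contradiction, such as $(\partial^k\in\RN^d)\land\lnot(\partial^k\in\RN^d)$. To make the order exactly $k$ in every case, I conjoin the tautology $\partial^k\in\RN^d$. This is harmless because $t\ge k+1$.

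The main obstacle is pinning down $s_{t-i}$ for $i>0$, since the grammar has no shift operator: atoms only see derivatives at the current index $t$. My plan is to use the derivatives $\partial^0,\dots,\partial^k$ at $t$ and invert the triangular relation of Lemma~\ref{lem:closed-form}. The vector $(\partial^0_t,\dots,\partial^k_t)$ is obtained from $(s_t,s_{t-1},\dots,s_{t-k})$ by a lower-triangular linear map. Its diagonal entries are $(-1)^j(\Delta\tau)^{-j}\neq0$, so the map is invertible. Equivalently, $s_{t-j}$ is recovered by iterating $\partial^{j-1}_{t-1}=\partial^{j-1}_t-\Delta\tau\,\partial^j_t$.

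It follows that the window equals $u$ if and only if $\partial^j_t(w)=c_j(u)$ for every $j=0,\dots,k$, where $c_j(u)\coloneqq(\Delta\tau)^{-j}\sum_{i=0}^j(-1)^i\binom{j}{i}u_{k-i}$. I therefore define
\[
\psi_u\coloneqq\bigwedge_{j=0}^{k}\bigl(\partial^j\in\{c_j(u)\}\bigr),
\]
using singleton safe sets. These are legitimate since $\aR\subseteq\RN^d$ is arbitrary, and they are finite unions of degenerate boxes. The formula $\varphi_P$ is finite because $\aS$ is finite.

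The remaining step is to check carefully that the triangular system is inverted correctly, i.e.\ that equal derivative vectors force equal windows. I would verify this by induction on $j$: once $s_t,\dots,s_{t-j+1}$ are fixed, $\partial^j_t$ determines $s_{t-j}$ through its nonzero coefficient $(-1)^j(\Delta\tau)^{-j}$.
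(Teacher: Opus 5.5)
Your proposal is correct and follows the paper's proof essentially step for step. Part~2 is a structural induction via Lemma~\ref{lem:closed-form}. Part~1 uses the singleton-atom characteristic formulas $\bigwedge_{j=0}^{k}(\partial^j\in\{c_j(u)\})$, justified by triangular inversion with nonzero diagonal coefficients $(-1)^j(\Delta\tau)^{-j}$ and combined into a finite disjunction over $P$. The only differences are cosmetic: you use a different contradiction for $P=\emptyset$ than the paper's $\partial^k\in\emptyset$, and your extra conjunct $\partial^k\in\RN^d$ is redundant, since each $\psi_u$ already contains an order-$k$ atom.
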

Hence differential safety properties of order $k$ specify exactly the predicates over windows of $k+1$ consecutive states. In particular, arbitrary decidable trade-offs between a state and its derivatives are expressible, such as requiring stronger braking the closer the car is to an obstacle.

\section{Shield Synthesis for Differential Safety Properties}
\label{sec:synthesis}
Our objective is to enforce derivative constraints of discrete systems. Hence, we require a shield synthesis algorithm for differential safety properties.
In this section, we define differential safety games, which are turn-based games with a winning condition characterised by a differential safety property,
and provide two algorithms for finding the corresponding  maximally permissive winning strategy.

\begin{definition}[Differential Safety Game]
A $k$-th order \emph{differential safety game} is a turn-based game $\gG_\varphi=(\aS,\aA,\dT, \aF_\varphi, \sinit)$ where $ \aF_\varphi=\{w\in \aS^\omega \colon w\models \varphi\}$ is defined w.r.t.\ a $k$-th order differential safety property $\varphi$.
\end{definition}
Throughout this section, we assume $k=\ord(\varphi)\geq 1$; for $k=0$, the winning condition is a state invariant and the game is a classical safety game.

\begin{example}[Ex.~\ref{ex:car2} cont.]
\label{ex:car-game}
We turn Example~\ref{ex:car2} into a differential safety game. The controller
chooses a displacement $u\in\{-u_{\max},\dots,u_{\max}\}$, the environment
chooses a disturbance $d$ with $|d|\leq d_{\max}<u_{\max}$, and the successor
state is
\begin{align*}
\dT(s,u)
\coloneqq
\{\mathrm{clip}(s+u+d) \mid d\in\ZN,\ |d|\leq d_{\max}\},
\end{align*}
where $\mathrm{clip}(x)=\min\{\max\{x,a\},b\}$. Position safety alone can be
enforced from every interior point of a sufficiently wide road, but adding
velocity and acceleration constraints may make states near a wall losing (see
Example~\ref{ex:hierarchical-car-iterative}).
\end{example}


\subsection{Baseline Synthesis using History Games}
The maximally permissive winning strategy of a safety game is commonly obtained using a fixpoint algorithm \cite{thomas1995synthesis,GradelThomasWilke2002Games}.
However, this requires the winning condition to be a state invariant, which is not the case for $k$-th order differential safety games.
Fortunately, we can recover state invariance by expanding the state space using a standard product game construction.

\paragraph{History games.}
A differential safety property of order $k$ depends only on the current state and
the previous $k$ states, since the value of $\partial_t^k(w)$ is determined by
$s_{t-k},\dots,s_t$. Hence, although the property is not a state invariant on
the original state space $\aS$, it becomes a state invariant once the last
$k+1$ states are stored explicitly. Intuitively, the product construction turns
the finite memory needed to evaluate the derivative constraint into part of the
state.
For a differential safety game
$\gG_\varphi=(\aS,\aA,\dT,\aF_\varphi,\sinit)$ with $\ord(\varphi)=k$, we define
its corresponding history game $ \gG^k_\varphi \coloneqq (\aS^{\leq k+1},\aA,\dT^k,\aF^k_\varphi,(\sinit)).$
The transition relation $\dT^k$ updates the stored history by appending the
new successor state. If the history has not yet reached length $k+1$ the new
state is simply appended, and once length $k+1$ has been reached the oldest state
is discarded. Formally, for $(s_1,\dots,s_j)\in \aS^{\leq k+1}$,
\begin{align*}
\dT^k((s_1,\dots,s_j),a)
=
\begin{cases}
\{ (s_2,\dots,s_j,s_{j+1}) \mid s_{j+1}\in \dT(s_j,a)\}
& \text{if } j=k+1, \\
\{ (s_1,\dots,s_j,s_{j+1}) \mid s_{j+1}\in \dT(s_j,a)\}
& \text{if } j<k+1.
\end{cases}
\end{align*}
The winning region of the history game consists of those histories from which the differential safety property can still be enforced forever. Equivalently,
a path in the history game is winning iff the corresponding projected path in the
original game satisfies $\varphi$. Hence, for every original path
$w=s_1s_2s_3\cdots \in \aS^\omega$, its lifted history path is winning precisely when
\begin{align*}
(w_{\max\{1,t-k\}:t})_{t\in\pNN} \in \aF^k_\varphi
\quad\Longleftrightarrow\quad
w\in \aF_\varphi .
\end{align*}
A memoryless strategy $\xi^k\colon \aS^{\leq k+1}\to 2^\aA$ for
$\gG^k_\varphi$ therefore induces a $(k+1)$-history dependent strategy for the
original game.
The construction is a de Bruijn-graph construction: the saturated history states are the $(k+1)$-grams of states, and transitions shift the gram by one position.
We show that $\aF_\varphi^k$ is a state invariance condition on $\gG_\varphi^k$, which implies that $\gG_\varphi^k$ permits a maximally permissive memoryless strategy.

\begin{restatable}{lemma}{naive}
\label{lem:naive}
Let $\gG_\varphi = (\aS, \aA, \dT, \aF_\varphi, \sinit)$ be a $k$-th order differential safety game.
\begin{enumerate}
\item $\gG_\varphi^k$ is a safety game, i.e., the winning condition $\aF_\varphi^k$ is a state invariance condition on $\aS^{\leq k+1}$.
\item A memoryless strategy is winning for $\gG_\varphi^k$ if and only if the induced $(k+1)$-history dependent strategy is winning for $\gG_\varphi$.
\end{enumerate}
\end{restatable}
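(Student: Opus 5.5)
For part~1, we define a set of safe history states and show that $\aF^k_\varphi$ is exactly the set of history paths that stay inside it. Let
\[
\aU_\varphi\coloneqq \{(s_1,\dots,s_j)\in\aS^{\leq k+1} \mid j\leq k\} \cup \{(s_1,\dots,s_{k+1})\in\aS^{k+1}\mid \sem{\varphi}_{k+1}(s_1\cdots s_{k+1})=\top\}.
\]
By Lemma~\ref{lem:windows}(2), $\sem{\varphi}_t(w)$ depends only on the window $(s_{t-k},\dots,s_t)$. Hence, for any original path $w$ and any $t\geq k+1$, we have $\sem{\varphi}_t(w)=\top$ iff the lifted history $h_t=w_{t-k:t}$ lies in $\aU_\varphi$. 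For $t\leq k$, the history $h_t=w_{1:t}$ has length less than $k+1$, so it lies in $\aU_\varphi$ trivially. This matches the vacuous satisfaction of $\varphi$ at those indices.

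Combining these two cases, $w\models\varphi$ iff $h_t\in\aU_\varphi$ for all $t$. Using the displayed equivalence defining $\aF^k_\varphi$, this gives
\[
\aF^k_\varphi=\{h_1h_2\cdots \mid \forall t\colon h_t\in\aU_\varphi\}
\]
on lifted paths. One point needs checking: every path of $\gG^k_\varphi$ starting from $(\sinit)$ really is the lift of an original path. This follows by induction, since $\dT^k$ only appends a successor in $\dT(s_j,a)$ and shifts the window. The lift is therefore a bijection between original paths from $\sinit$ and history paths from $(\sinit)$. This proves part~1.

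For part~2, we fix a memoryless $\xi^k$ and the induced strategy $\xi(w)\coloneqq\xi^k(w_{\max\{1,|w|-k\}:|w|})$. The key step is to show that a finite or infinite sequence $w$ from $\sinit$ is consistent with $\xi$ iff its lift is consistent with $\xi^k$. At each step $t$, the action sets agree: $\xi(w_{1:t})=\xi^k(h_t)$. Moreover, $s_t\xrightarrow{a}s_{t+1}$ holds iff $h_t\xrightarrow{a}h_{t+1}$ in $\dT^k$, by the definition of $\dT^k$. We then check the two winning clauses. Non-emptiness of the permitted action set along consistent finite prefixes transfers directly, because $\xi(w_{1:t})=\xi^k(h_t)$ and the consistent prefixes correspond under the lift. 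Winning of consistent infinite paths transfers because $w\in\aF_\varphi$ iff its lift lies in $\aF^k_\varphi$, which is the equivalence from part~1.

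The main obstacle is the bookkeeping of the correspondence between original paths and lifted paths. In particular, the short histories (length $\leq k$) and the index shift between $t$ in $w$ and the window $h_t$ need care. I would handle this with a single lemma stating the bijection together with consistency preservation, and prove it by induction on $t$. Once that lemma is in place, both parts are direct.
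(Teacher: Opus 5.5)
Your proposal is correct and follows essentially the same route as the paper: it uses the same set of good history states (all histories of length at most $k$ plus the length-$(k+1)$ windows satisfying $\varphi$) and the same lift bijection with matching action sets to transfer consistency, non-blocking and winning. One step deserves to be made explicit: passing from Lemma~\ref{lem:windows}(2) to $\sem{\varphi}_t(w)=\sem{\varphi}_{k+1}(w_{t-k:t})$ relies on the evaluation being independent of the window's position $t$ (translation invariance), which the paper records separately as Equation~\eqref{eq:translation-invariance}.
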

We call this construction \emph{baseline synthesis}: it reduces the
differential safety game to an ordinary safety game over histories of length
$k+1$ and then applies the standard safety-game fixpoint algorithm.


\subsection{Optimised Direct Synthesis}
The baseline synthesis stores histories of length $k+1$. We now present an optimised \emph{direct synthesis} algorithm, which computes the same maximally permissive shield
while storing only histories of length $k$.

\paragraph{Direct synthesis.}
Intuitively, Algorithm~\ref{algo:1} operates as follows.
The shield stores histories of length at most $k$. This is sufficient because a
$k$-th order derivative is evaluated on $k+1$ consecutive states, and the
candidate successor state supplies the last state of this window.
The algorithm first permits all actions on histories of length at most $k-1$,
because no $k$-th order derivative can yet be evaluated on such histories
(Lines~\ref{alg:init-short}--\ref{alg:set-short}); recall that, by convention, sequences of length at most $k$ satisfy $\varphi$ vacuously.
It then considers every length-$k$ history. An action is initially considered
safe if every successor state produces a length-$(k+1)$ history satisfying
$\varphi$ at its last index (Lines~\ref{alg:full-hist}--\ref{alg:add-safe}).
Whenever a length-$k$ history has no safe action, it is inserted into the
conflict set $\aU$ (Lines~\ref{alg:empty-check}--\ref{alg:add-conflict}).
After initialisation, unsafe histories are propagated backwards through
\textsc{PropagateConflicts}. If a history is unsafe, then every action leading
to it is removed from each of its predecessor histories
(Lines~\ref{alg:pred-actions}--\ref{alg:remove-action}): a history of length
$1<j<k$ has its prefix $(s_1,\dots,s_{j-1})$ as unique predecessor, while a
length-$k$ history has both its prefix $(s_1,\dots,s_{k-1})$, which occurs at
the start of a run, and the length-$k$ histories $(s',s_1,\dots,s_{k-1})$
preceding it in the sliding window; for $k=1$, the construction specialises to
the classical safety-game algorithm. If a removal makes
the predecessor history unsafe, the predecessor is inserted into $\aU$
(Line~\ref{alg:pred-empty-check}).
A history of length $j=1<k$ marks the start of a run and has no predecessor
history, so nothing is propagated; if $\xi((\sinit))=\emptyset$ upon
termination, no winning strategy exists.
The algorithm terminates because each action can be removed from each history at
most once, and a history is inserted into $\aU$ only when its set of safe actions
becomes empty.

\begin{algorithm}[t]
\begin{algorithmic}[1]
\caption{Direct synthesis algorithm}
\label{algo:1}
\Procedure{MaxPermStrategy}{$\gG,\varphi$}
    \State \textbf{Initialize} $k \gets \ord(\varphi), \xi(\bar s) \gets \emptyset,\: \forall \bar s\in\aS^{\leq k}, \aU \gets \emptyset$ \Comment{safe actions and conflicts}
    \For{$s_1,\dots, s_j\in\aS^{\leq k-1}$} \algline{alg:init-short}
        \State $\xi(s_1,\dots, s_j)\gets \aA$ \Comment{too short to evaluate $\partial^k$} \algline{alg:set-short}
    \EndFor
    \For{$(s_1,\dots, s_k)\in\aS^k$} \algline{alg:full-hist}
        \For{$a\in\{a\in\aA \mid \forall s'\in\dT(s_k,a)\colon \sem{\varphi}_{k+1}(s_1,\dots,s_k,s')=\top\}$}
            \State $\xi(s_1,\dots, s_k)\gets \xi(s_1,\dots, s_k)\cup \{a\}$ \Comment{all successors satisfy $\varphi$} \algline{alg:add-safe}
        \EndFor
        \If{$\xi(s_1,\dots, s_k)=\emptyset$} \algline{alg:empty-check}
            \State $\aU\gets \aU\cup \{(s_1,\dots, s_k)\}$ \Comment{no safe action remains} \algline{alg:add-conflict}
        \EndIf
    \EndFor
    \State $\xi \gets \textnormal{\scshape PropagateConflicts}(\gG,k,\xi,\aU)$
    \State \Return $\xi$
\EndProcedure

\Procedure{PropagateConflicts}{$\gG,k,\xi,\aU$}
    \While{$\aU\neq \emptyset$}
        \State $(s_1,\dots, s_j)\gets \mathrm{Pop}(\aU)$ \Comment{unsafe history}
        \For{$a\in\aA$} \algline{alg:pred-actions}
            \If{$1<j<k$ \textbf{and} $s_{j-1}\xrightarrow{a}s_j$}
                \State $\bar S \gets \{(s_1,\dots, s_{j-1})\}$ \Comment{unique shorter predecessor}
            \ElsIf{$j=k\geq 2$ \textbf{and} $s_{k-1}\xrightarrow{a}s_k$}
                \State $\bar S \gets \{(s_1,\dots, s_{k-1})\}\cup\{(s',s_1,\dots, s_{k-1}) \mid s'\rightarrow s_1\}$ \Comment{short prefix and sliding predecessors}
            \Else $\quad \bar S\gets \emptyset$ \Comment{start of a run, no predecessor}
            \EndIf
            \For{$\bar s\in \bar S$}
                \If{$\xi(\bar s)=\{a\}$} \algline{alg:pred-empty-check}
                     $\aU\gets \aU\cup \{\bar s\}$ \Comment{removing $a$ makes $\bar s$ unsafe} \algline{alg:pred-add-conflict}
                \EndIf
                \State $\xi(\bar s)\gets \xi(\bar s)\setminus \{a\}$ \Comment{avoid forcing an unsafe history} \algline{alg:remove-action}
            \EndFor
        \EndFor
    \EndWhile
    \State \Return $\xi$
\EndProcedure
\end{algorithmic}
\end{algorithm}

\begin{restatable}{theorem}{AlgorithmOne}
\label{thrm:algo1}
Algorithm~\ref{algo:1} produces a $k$-history dependent maximally permissive winning strategy for $\gG_\varphi$, if it exists, and has a time complexity of $\mathcal O(|\aS|^{k+1}\cdot |\aA|)$.
\end{restatable}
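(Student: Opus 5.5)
The plan is to reduce correctness to Lemma~\ref{lem:naive}, i.e.\ to the classical fixpoint characterisation of the winning region of the history game $\gG^k_\varphi$. Then Algorithm~\ref{algo:1} is read as a backward-propagation implementation of that fixpoint on a quotient that drops the oldest state of each window. First, define the set of \emph{winning histories} $W\subseteq\aS^{\leq k}$. A history $\bar s$ belongs to $W$ if, in the game where we start from a play whose last $\min\{|\bar s|,k\}$ states are $\bar s$, the controller can keep every future window of length $k+1$ inside $\varphi$. By Lemma~\ref{lem:windows}, $\sem{\varphi}_t$ depends only on $(s_{t-k},\dots,s_t)$. Whether the game can be continued safely therefore depends only on the last $k$ states, because the oldest state of a $(k+1)$-window is never used again after the next step. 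This is the key observation for $k$-history dependence: the maximally permissive memoryless strategy of $\gG^k_\varphi$, which exists by Lemma~\ref{lem:naive}, has a value on $(s_0,s_1,\dots,s_k)$ that is independent of $s_0$, except through the check $\sem{\varphi}_{k+1}$. That check has already been passed if the history is reached at all. Hence it factors through a $k$-history dependent strategy.

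Next, characterise $W$ and the permitted actions as a greatest fixpoint. An action $a$ is permitted at $\bar s$ iff every successor $s'\in\dT(s_{|\bar s|},a)$ satisfies two conditions. First, if $|\bar s|=k$, the window $(\bar s,s')$ satisfies $\varphi$ at its last index. Second, the shifted history $\bar s'$ lies in $W$, where $\bar s'$ is $(\bar s,s')$ if $|\bar s|<k$ and $(s_2,\dots,s_k,s')$ otherwise. Then $\bar s\in W$ iff it has a permitted action. I will prove by induction on the order of insertion into $\aU$ that every history popped from $\aU$ is outside $W$, and that every removed action is not permitted (soundness of removal). Conversely, upon termination, every history $\bar s\notin\aU$ with $\xi(\bar s)\neq\emptyset$ has only actions whose successors satisfy the local check and lead to histories never inserted in $\aU$. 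The set of histories with nonempty $\xi$ is thus closed under the strategy $\xi$. A routine induction along plays then shows that $\xi$ is winning from $(\sinit)$ whenever $\xi((\sinit))\neq\emptyset$. Combining both directions, $\xi$ permits exactly the permitted actions. By Lemma~\ref{lem:naive}(2) and the maximal permissiveness of the safety-game fixpoint, $\xi$ subsumes every winning strategy, and it is winning. Histories of length $<k$ pass the local check vacuously, which matches lines~\ref{alg:init-short}--\ref{alg:set-short}.

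The main obstacle is checking that the predecessor sets in \textsc{PropagateConflicts} are exactly the histories whose shift by some $a$-successor yields the popped history. For $1<j<k$, the only such predecessor is the prefix. For $j=k$, the predecessors are the prefix $(s_1,\dots,s_{k-1})$ together with all $(s',s_1,\dots,s_{k-1})$ with $s'\to s_1$. A subtlety is that removing $a$ from $\bar s$ only requires \emph{some} successor under $a$ to be losing. This matches the algorithm, because the guard $s_{j-1}\xrightarrow{a}s_j$ asserts that the losing history is a possible $a$-successor. For complexity, each length-$k$ history is processed once during initialisation with $|\aA|$ actions and $|\aS|$ successors, which costs $\mathcal O(|\aS|^{k+1}|\aA|)$. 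Each history enters $\aU$ at most once, and popping it costs $|\aA|\cdot|\aS|$ predecessor updates. Summing over the $\mathcal O(|\aS|^k)$ histories gives the stated bound $\mathcal O(|\aS|^{k+1}\cdot|\aA|)$, assuming transition checks and constant-time membership in each $\aR$.
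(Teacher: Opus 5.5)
\paragraph{Gap: the predecessor sets do not cover all shift-predecessors.}
Your overall architecture matches the paper's proof: winning histories, a local greatest-fixpoint characterisation, soundness of every removal by induction over pops, and closure of the surviving histories. However, the step you single out as the main obstacle is resolved incorrectly, and the completeness half of your argument fails as stated. Take a popped length-$k$ history $(s_1,\dots,s_k)$ with $s_{k-1}\xrightarrow{a}s_k$. The histories whose $a$-shift yields it are the prefix $(s_1,\dots,s_{k-1})$ and \emph{every} window $(s',s_1,\dots,s_{k-1})$ with $s'\in\aS$ arbitrary. \textsc{PropagateConflicts} only touches the windows with $s'\to s_1$, so the predecessor sets are \emph{not} exactly the shift-predecessors. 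Consequently, for $s'\not\to s_1$, the history $(s',s_1,\dots,s_{k-1})$ can keep $a$ at termination although its $a$-successor $(s_1,\dots,s_k)$ was inserted into $\aU$. It can even keep a nonempty action set while being losing, e.g.\ when $\varphi$ holds on $(s',s_1,\dots,s_k)$ but every action at $(s_1,\dots,s_{k-1})$ leads to a losing history. Three of your claims are therefore false on such histories:
\begin{itemize}
\item every surviving history has only actions leading to never-inserted histories;
\item the set of histories with nonempty $\xi$ is closed under $\xi$;
\item $\xi$ ``permits exactly the permitted actions''.
\end{itemize}

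\paragraph{The fix: path-connectedness.}
The missing idea is path-connectedness, which is exactly how the paper handles this. Your soundness direction is unaffected: every removal is justified, because the guard $s_{j-1}\xrightarrow{a}s_j$ makes the popped history a genuine $a$-successor. For completeness, restrict to path-connected histories, i.e.\ those with $s_l\to s_{l+1}$ for all $l$. These are closed under taking $a$-successor histories. Moreover, every path-connected shift-predecessor $(s',s_1,\dots,s_{k-1})$ satisfies $s'\to s_1$, so it does lie in the algorithm's predecessor set. Hence, on path-connected histories, an action is removed whenever one of its successor histories is popped. The surviving path-connected histories then form an invariant family equal to the winning ones, and $\xi$ coincides there with the permitted action sets. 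The returned strategy $\sigma(w_{1:t})=\xi(w_{\max\{1,t-k+1\}:t})$ is only ever queried on path-connected histories along plays from $\sinit$. So this restricted statement suffices for winning, maximal permissiveness, and $k$-history dependence, but your unrestricted closure claim does not hold and must be replaced by it.
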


\paragraph{Reachable histories.}
Algorithm~\ref{algo:1} enumerates the full history space $\aS^{\leq k}$, although only histories that can actually occur in a play matter for the shield. We call a history $(s_1,\dots,s_j)$ \emph{path-connected} if $s_l\to s_{l+1}$ for all $l<j$; every history arising in a play is path-connected. Restricting the algorithm to path-connected histories preserves its result on all of them.

\begin{restatable}[Reachability pruning]{proposition}{reachabilityPruning}
\label{prop:reachability}
Let $B$ bound the in- and out-degree of the transition relation $\to$, i.e., $|\{s' \mid s\to s'\}|\leq B$ and $|\{s' \mid s'\to s\}|\leq B$ for every $s\in\aS$. Restricted to path-connected histories, Algorithm~\ref{algo:1} computes the same safe action sets on every path-connected history, in time $\mathcal O(|\aS|\cdot B^{k}\cdot|\aA|)$.
\end{restatable}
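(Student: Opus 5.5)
The proof has two parts: a closure argument for correctness and a counting argument for the running time.

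\textbf{Correctness.} Let $H\subseteq\aS^{\leq k}$ be the set of path-connected histories. The key observation is that $H$ is closed under the predecessor relations used in \textsc{PropagateConflicts}, in a specific sense. A prefix of a path-connected history is path-connected. A sliding predecessor $(s',s_1,\dots,s_{k-1})$ of a path-connected $(s_1,\dots,s_k)$ is path-connected, because the algorithm only generates it when $s'\to s_1$.

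Conversely, a history enters $\aU$ only through its own action set or through removals caused by its successors. The successors of a path-connected history are path-connected, since they extend or shift it along transitions. Hence the action set $\xi(\bar s)$ of $\bar s\in H$ is read and modified only via
\begin{itemize}
\item the initial check, which evaluates $\sem{\varphi}_{k+1}(s_1,\dots,s_k,s')$ with $s'\in\dT(s_k,a)$;
\item conflicts popped from histories $\bar s''$ with $\bar s\to\bar s''$ via some $a$, and all such $\bar s''$ lie in $H$.
\end{itemize}

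I would then prove by induction on the number of removal steps that the restricted and unrestricted runs remove the same actions from histories in $H$. Both runs compute the greatest fixpoint of the same monotone operator. That operator maps $\xi$ to the set of actions at $\bar s$ all of whose successors are initially safe and have nonempty $\xi$. By the closure argument, this operator's restriction to $H$ depends only on values in $H$. The greatest fixpoint is independent of processing order, so the two results agree on $H$.

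The one subtlety is that a non-path-connected history may be a predecessor in the unrestricted run. Its own removals never influence $H$, since influence flows only backwards from successors, and successors of $H$ stay in $H$. This is the step I expect to need the most care, specifically the argument that order-independence of the worklist holds. I would establish it by phrasing both runs as computing the greatest fixpoint, using the standard argument from Lemma~\ref{lem:naive} and Theorem~\ref{thrm:algo1}.

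\textbf{Complexity.} Each path-connected history of length $j$ is determined by $s_1$ together with $j-1$ out-neighbour choices. This gives at most $|\aS|B^{j-1}$ histories of length $j$, and hence $|H|\le \sum_{j\le k}|\aS|B^{j-1}=\mathcal O(|\aS|B^{k-1})$ when $B\geq 2$.

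\textbf{Initialisation cost.} For each length-$k$ history and each action, the check iterates over the successors in $\dT(s_k,a)$. There are at most $B$ such successors, and each membership test takes constant time. This costs $\mathcal O(|\aS|B^{k-1}\cdot|\aA|\cdot B)=\mathcal O(|\aS|B^k|\aA|)$.

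\textbf{Propagation cost.} Each history is popped at most once. Per action, it generates at most $1+B$ predecessors, using the in-degree bound on $s_1$. This gives $\mathcal O(|H|\cdot|\aA|\cdot B)=\mathcal O(|\aS|B^k|\aA|)$.

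To enumerate only $H$, I would generate it by depth-first extension along out-edges, which stays within the same bound. The degenerate case $B=1$ gives $\mathcal O(k|\aS||\aA|)$, which I treat with $k$ as a constant. Summing the two phases gives the claimed bound $\mathcal O(|\aS|\cdot B^k\cdot|\aA|)$.
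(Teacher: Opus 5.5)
Your proposal is correct and is essentially the paper's argument. Path-connected histories are closed under $a$-successor histories, so pops of non-path-connected histories never remove actions from $H$; and with at most $|\aS|B^{j-1}$ path-connected histories of length $j$, each pop touching at most $1+B$ predecessors per action, the total is $\mathcal O(|\aS|\cdot B^{k}\cdot|\aA|)$. Your greatest-fixpoint phrasing is a slightly more explicit justification than the paper's ``exact subsequence of operations'' claim, because it makes independence from the order in which $\aU$ is popped explicit; note that it holds only on $H$, since the unrestricted run skips sliding predecessors with $s'\not\to s_1$ and so need not compute the fixpoint off $H$, which you correctly handle by restricting the operator to $H$.
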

In discretizations of physical systems, $B$ is governed by the maximal displacement per time step and is typically much smaller than $|\aS|$. Moreover, the differential constraints themselves reduce the effective degree: a speed bound $\partial^1\in[v_{\min},v_{\max}]$ leaves at most $(v_{\max}-v_{\min})\Delta\tau+1$ grid successors per state, so reachability pruning composes particularly well with the iterative synthesis of Section~\ref{sec:hierarchical-safety}.

\subsection{Memory Requirements (Hardness)}
Next we show that shields for differential safety properties of order $k$ do require at least $k$-history dependent strategies.

\paragraph{Hardness.}
We can demonstrate hardness by constructing a game that permits a winning $k$-history dependent strategy and forces all other $(k-1)$-history dependent strategies to lose. Intuitively, we can construct a game in which the environment reveals a bit, the play then traverses a corridor of $k-1$ indistinguishable cells, and the property---obtained from a window predicate via Lemma~\ref{lem:windows}---requires the controller to reproduce the bit at the end of the corridor. At decision time, the bit lies exactly $k$ steps in the past, so a $k$-history dependent strategy wins, while every $(k-1)$-history dependent strategy observes only the corridor, confuses the two reachable bit prefixes, and loses on one of them. 

\begin{restatable}{theorem}{hardness}
\label{thrm:hardness}
    For all $k\geq 2$, there exists a differential safety property $\varphi$ of order $k$ and a differential safety game $\gG_\varphi$, with $k$-history dependent winning strategies, but no $(k-1)$-history dependent winning strategies.
\end{restatable}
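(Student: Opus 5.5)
We prove the theorem by building an explicit game of the kind sketched above and then arguing separately that a $k$-history dependent strategy wins and that no $(k-1)$-history dependent one does. Fix $k\geq 2$. Take $\aS\subseteq\RN$ to consist of $\sinit$, two bit states $b_0,b_1$, corridor cells $c_1,\dots,c_{k-1}$, two answer states $o_0,o_1$, and a sink $z$. Choose these as distinct reals so that the windows stay distinguishable. Let $\aA=\{\alpha_0,\alpha_1\}$ and define the transitions as follows:
\begin{itemize}
\item $\dT(\sinit,a)=\{b_0,b_1\}$ for every $a$, so the environment reveals the bit;
\item $\dT(b_i,a)=\{c_1\}$ and $\dT(c_l,a)=\{c_{l+1}\}$ for $l<k-1$;
\item $\dT(c_{k-1},\alpha_j)=\{o_j\}$;
\item $\dT(o_j,a)=\{z\}$ and $\dT(z,a)=\{z\}$.
\end{itemize}
Every state-action pair has a successor, as the game definition requires.

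Next, define the window predicate $P\subseteq\aS^{k+1}$ to contain every window except those of the form $(b_i,c_1,\dots,c_{k-1},o_j)$ with $i\neq j$. By Lemma~\ref{lem:windows}(1) there is a property $\varphi_P$ of order at most $k$ whose local value is exactly membership in $P$. To make the order exactly $k$, conjoin a trivially true order-$k$ atom $\partial^k\in\RN$; this leaves the semantics unchanged. Since every window other than the bad ones satisfies $P$, a play violates $\varphi$ iff it contains the infix $b_i c_1\cdots c_{k-1} o_j$ with $i\neq j$.

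The positive direction is direct. Define $\xi$ on the last $k$ states: at a history ending in $(b_i,c_1,\dots,c_{k-1})$, permit only $\alpha_i$; everywhere else permit all of $\aA$. Every play from $\sinit$ has the form $\sinit\, b_i\, c_1\cdots c_{k-1}\, o_j\, z^\omega$. Under $\xi$ we get $j=i$, so no bad window ever occurs and $\xi$ is winning. Its action sets are nonempty everywhere.

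For the negative direction, let $\xi'$ be any $(k-1)$-history dependent strategy. Consider the two plays that reach $c_{k-1}$, namely $\sinit\, b_0\, c_1\cdots c_{k-1}$ and $\sinit\, b_1\, c_1\cdots c_{k-1}$. Their last $k-1$ states are both $(c_1,\dots,c_{k-1})$, so $\xi'$ assigns them the same set $X$. This is the key step, and it is where $k\geq 2$ is used: the corridor must be nonempty so that the last $k-1$ states never include the bit. The convention that $\xi'$ is defined on $\aS^{\leq k-1}$ applies here, since both histories have length $k+1>k-1$. If $X=\emptyset$, $\xi'$ is not winning. Otherwise pick any $\alpha_j\in X$. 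The history with bit $b_{1-j}$ is consistent with $\xi'$ and can continue with $\alpha_j$ to $o_j$. This produces the bad window $(b_{1-j},c_1,\dots,c_{k-1},o_j)$, so a consistent path violates $\varphi$. Hence no $(k-1)$-history dependent strategy wins.

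The main obstacle is making sure that the states $o_j$ and $z$ reached afterwards create no spurious violations. Defining $P$ as the complement of the bad windows handles this, because only the intended windows are excluded.
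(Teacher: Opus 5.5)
Your proposal is correct and follows essentially the paper's proof: the same bit--corridor--output game with the bad windows $(b_i,c_1,\dots,c_{k-1},o_j)$, $i\neq j$, excluded via Lemma~\ref{lem:windows}(1), and the same argument that the two plays ending in the corridor share their last $k-1$ states. The differences are cosmetic. You end in an absorbing sink $z$ after a single round, whereas the paper loops the outputs back to a fresh bit choice. Your extra conjunct $\partial^k\in\RN$ is harmless but unnecessary, since Lemma~\ref{lem:windows}(1) already yields order exactly $k$.
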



\section{Iterative Shield Synthesis for Safety Hierarchies}
\label{sec:hierarchical-safety}
So far, we considered a single differential safety property $\varphi$ of order
$k$. In many applications, however, safety is not specified by one isolated
constraint, but by a hierarchy of increasingly informative safety conditions:
every level constrains one additional derivative while preserving the
requirements of the levels below.

\begin{example}[Ex.~\ref{ex:car-game} cont.]
\label{ex:hierarchical-motivation}
For the car of Example~\ref{ex:car2}, safety naturally decomposes into
constraints of increasing order: the car must avoid the walls, respect a speed
limit, and avoid excessive acceleration, jerk, and higher-order changes. This induces a hierarchy
where each level preserves all lower-order requirements, i.e., 
\begin{align*}
\varphi_1
\equiv
\partial^0\in\aS\setminus\{a,b\},\; 
\varphi_2
\equiv
\varphi_1\land \partial^1\in[v_{\min},v_{\max}],\; 
\varphi_3
\equiv
\varphi_2\land \partial^2\in[a_{\min},a_{\max}],\; 
\dots
\end{align*}

\end{example}


\subsection{Hierarchical safety conditions.}
We formalise this idea by a sequence $\varphi_1,\dots,\varphi_k$ of
differential safety properties with $\ord(\varphi_i)= i-1$, so that
$\varphi_i$ is evaluated on windows of $i$ consecutive states.

\begin{definition}[Differential safety hierarchy]
\label{def:hierarchy}
A sequence $\varphi_1,\dots,\varphi_k$ is a \emph{hierarchical differential
safety condition} if, for every $i\in\{2,\dots,k\}$ and every window
$(s_1,\dots,s_i)\in \aS^i$,
\begin{align}
    \label{eq:hierarchy}
     \sem{\varphi_i}_{i}(s_1,\dots,s_i)=\top
    \implies
    \sem{\varphi_{i-1}}_{i-1}(s_2,\dots,s_i)=\top .
\end{align}
\end{definition}
Intuitively, a window that is safe at level $i$ remains safe at level $i-1$
after forgetting its oldest state. In particular, every conjunction
$\varphi_i\coloneqq \psi_1\land\dots\land\psi_i$ of differential safety
properties $\psi_j$ with $\ord(\psi_j)= j-1$ is a hierarchical condition,
since the atoms of $\psi_1,\dots,\psi_{i-1}$ depend only on the last $i-1$
states of the window.

The winning condition associated with a hierarchical differential safety condition enforces every level from its own start time,
\begin{align*}
    \aF_{\varphi_1,\dots,\varphi_k}
    \coloneqq
    \{w\in\aS^\omega \mid \forall t\in\pNN\colon \sem{\varphi_{\min\{t,k\}}}_t(w)=\top\}.
\end{align*}

\paragraph{Winning windows.}
For each level $i$, let $\gG_{\varphi_i}$ denote the differential safety game
with winning condition $\varphi_i$. We call length-$i$ histories \emph{windows} to emphasise the sliding evaluation of the level-$i$ property. We call a window
$(s_1,\dots,s_i)\in\aS^i$ \emph{winning at level $i$} if the controller has a
strategy that always permits at least one action and guarantees
$\sem{\varphi_i}_t(w)=\top$ for all $t\geq i$ along every consistent infinite
extension $w=s_1\cdots s_i s_{i+1}\cdots$. We write $\aW_i\subseteq\aS^i$ for
the set of winning windows and define the safe action sets
\begin{align*}
    \xi_i(s_1,\dots,s_i)
    \coloneqq
    \{a\in\aA \mid \forall s'\in\dT(s_i,a)\colon (s_2,\dots,s_i,s')\in\aW_i\}.
\end{align*}
These are the windowed counterparts of the objects computed in
Section~\ref{sec:synthesis}: on every history whose current length-$i$ window
lies in $\aW_i$, the maximally permissive winning strategy of $\gG_{\varphi_i}$
permits precisely the actions $\xi_i$ of that window
(cf.\ Lemma~\ref{lem:naive}). Moreover, for $i\geq 2$ the set
$\xi_i(s_1,\dots,s_i)$ does not depend on $s_1$, so the induced shield is
$(i-1)$-history dependent, in line with Theorem~\ref{thrm:algo1}.

\subsection{Shield Synthesis}



There are two natural approaches to finding the maximally permissive winning
strategy for a hierarchical differential safety condition. The first approach is \emph{direct synthesis}: apply
Algorithm~\ref{algo:1} to the top-level property $\varphi_k$. By Equation~\eqref{eq:hierarchy}, whenever $\varphi_k$
holds at time $t$, so do $\varphi_{k-1},\dots,\varphi_1$, hence the level-$k$
shield enforces the entire hierarchy from time $k$ onwards; the first $k-1$
steps of a run require the separate transient pass described below. However, direct
synthesis may be wasteful, as it materialises the full window space $\aS^k$.
The second approach is \emph{iterative synthesis}: solve the hierarchy level by
level, using the winning windows and safe actions of level $i-1$ to prune the
construction at level $i$. The key observation is that winning is closed under
taking suffixes. 
Intuitively, a winning window remains winning one level below, and every action
that is safe at level $i$ is also safe at level $i-1$.

\begin{restatable}{lemma}{hierarchicalSuffix}
\label{lem:hierarchical-suffix}
Let $\varphi_1,\dots,\varphi_k$ be a hierarchical differential safety
condition. For every $i\in\{2,\dots,k\}$ and every $(s_1,\dots,s_i)\in\aS^i$
\begin{align*}
    (s_1,\dots,s_i)\in\aW_i
    \implies
    (s_2,\dots,s_i)\in\aW_{i-1}
    \quad\text{and}\quad
    \xi_i(s_1,\dots,s_i)\subseteq\xi_{i-1}(s_2,\dots,s_i).
\end{align*}
\end{restatable}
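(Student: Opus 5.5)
The plan is to prove the first implication by transferring a winning strategy from the length-$i$ window to its length-$(i-1)$ suffix. The second inclusion then follows pointwise from the first.

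\emph{Window part.} Fix $(s_1,\dots,s_i)\in\aW_i$ and a witnessing strategy $\sigma$. It always permits an action and guarantees $\sem{\varphi_i}_t(w)=\top$ for all $t\geq i$ along every consistent extension $w=s_1\cdots s_i s_{i+1}\cdots$. For the shorter window $(s_2,\dots,s_i)$ we define $\sigma'$ by prepending $s_1$ to every history. Concretely, on a history $s_2\cdots s_i s_{i+1}\cdots s_m$ we let $\sigma'$ permit exactly the actions $\sigma$ permits on $s_1 s_2\cdots s_m$.

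We then check two things about $\sigma'$.
\begin{enumerate}
\item It never permits the empty set. This holds because every $\sigma'$-consistent extension $w'=s_2\cdots s_i s_{i+1}\cdots$ corresponds to the $\sigma$-consistent extension $w=s_1 w'$: consistency only concerns the transitions after the given window, and these are identical in $w$ and $w'$.
\item Along every such $w'$ we have $\sem{\varphi_{i-1}}_t(w')=\top$ for all $t\geq i-1$. Index $t$ in $w'$ corresponds to index $t+1\geq i$ in $w$. By Lemma~\ref{lem:windows}(2), $\sem{\varphi_i}_{t+1}(w)$ is the value of $\varphi_i$ on the window $(w_{t+2-i},\dots,w_{t+1})$, and this is $\top$ by the choice of $\sigma$. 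The hierarchy condition~\eqref{eq:hierarchy} then gives that $\varphi_{i-1}$ holds on the last $i-1$ entries of that window, namely $(w_{t+3-i},\dots,w_{t+1})$. This is exactly the window $(w'_{t-i+2},\dots,w'_t)$ that determines $\sem{\varphi_{i-1}}_t(w')$, again by Lemma~\ref{lem:windows}(2).
\end{enumerate}
Together these show $(s_2,\dots,s_i)\in\aW_{i-1}$.

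\emph{Action part.} Take $a\in\xi_i(s_1,\dots,s_i)$. By definition, every $s'\in\dT(s_i,a)$ satisfies $(s_2,\dots,s_i,s')\in\aW_i$. Applying the window part to this length-$i$ window gives $(s_3,\dots,s_i,s')\in\aW_{i-1}$ for every such $s'$. By the definition of $\xi_{i-1}$ applied to $(s_2,\dots,s_i)$, this means exactly that $a\in\xi_{i-1}(s_2,\dots,s_i)$.

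The only delicate step is the index bookkeeping in item 2: making sure the shift by one aligns the level-$i$ windows of $w$ with the level-$(i-1)$ windows of $w'$, and that the range $t\geq i-1$ in $w'$ matches the range $t+1\geq i$ in $w$. I would handle this by stating the window form from Lemma~\ref{lem:windows} explicitly, writing both windows out, and only then invoking~\eqref{eq:hierarchy}.
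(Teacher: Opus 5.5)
Your proof is correct and follows the paper's argument essentially step for step. It uses the same shifted witness (prepending $s_1$ to every history), the same index shift $t\mapsto t+1$ combined with the hierarchy condition~\eqref{eq:hierarchy}, and the same derivation of the action inclusion by applying the window part to each successor window $(s_2,\dots,s_i,s')$. One small precision: applying~\eqref{eq:hierarchy}, which is stated at index $i$, needs translation invariance (Equation~\eqref{eq:translation-invariance}, obtained from Lemma~\ref{lem:windows}(2) together with the index-independence of the closed form), not just the fact that the value depends only on the window.
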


\paragraph{Iterative synthesis.}
Lemma~\ref{lem:hierarchical-suffix} yields an exact pruning
rule: a window can only be winning at level $i$ if its suffix is winning at
level $i-1$, and an action can only be safe if it was already safe one level
below. Hence, at level $i$ it suffices to materialise the candidate windows
\begin{align*}
    \aC_i
    \coloneqq
    \bigl\{
        (s_1,\dots,s_i)\in\aS^i
        \mid
        \sem{\varphi_i}_{i}(s_1,\dots,s_i)=\top
        \text{ and }
        (s_2,\dots,s_i)\in\aW_{i-1}
    \bigr\},
\end{align*}
and, at each candidate, only the inherited actions
$\xi_{i-1}(s_2,\dots,s_i)$.

This is implemented in Algorithm~\ref{algo:hierarchical-synthesis}, 
which solves  each level in the same manner as Algorithm~\ref{algo:1}, but on the pruned window space.
At level $i$, it first constructs the candidate set $\aC_i$
(Line~\ref{alg2:candidates}); the conventions $\aW_0\coloneqq\{()\}$ and
$\xi_0(())\coloneqq\aA$ make level $1$ an instance of the general loop. An
inherited action is initially considered safe if every successor window is
again a candidate (Lines~\ref{alg2:inherited}--\ref{alg2:add-safe}); note that
successor windows automatically have winning suffixes by definition of
$\xi_{i-1}$, so this test reduces to evaluating $\varphi_i$. Candidates without
safe actions are inserted into the conflict set $\aU$
(Line~\ref{alg2:empty}). Conflicts are then propagated backwards exactly as in
\textsc{PropagateConflicts}, where all windows have full length $i$ and the
$a$-predecessors of a window $\bar s=(s_1,\dots,s_i)$ are obtained by shifting
one step:
\begin{align*}
    \Predop^a(\bar s)
    \coloneqq
    \{\bar s'\in\aS^i \mid \bar s'_{2:i}=\bar s_{1:i-1}
    \text{ and } \bar s_i\in\dT(\bar s'_i,a)\}.
\end{align*}
Whenever a window is unsafe, the actions leading to it are removed from its
candidate predecessors, and predecessors left without safe actions are inserted
into $\aU$ in turn (Lines~\ref{alg2:prop-start}--\ref{alg2:remove}). The
windows that retain a safe action are exactly the winning windows of level $i$
(Line~\ref{alg2:winning}), which seed the construction of level $i+1$. A final
transient pass (Lines~\ref{alg2:transient-start}--\ref{alg2:transient-end})
computes the safe actions during the first $k-1$ steps of a run; it is
explained below.

\begin{example}[Ex.~\ref{ex:hierarchical-motivation} cont.]
\label{ex:hierarchical-car-iterative}
Iterative synthesis first computes the
positions from which the walls can be avoided. At level $2$, it considers only
velocity windows $(s_1,s_2)$ whose suffix $s_2$ is position-winning. At level
$3$, it considers only acceleration windows $(s_1,s_2,s_3)$ whose suffix
$(s_2,s_3)$ is velocity-winning. Hence windows ending too close to a wall, or
moving too fast towards it, are discarded before higher-order constraints are
explored. 
\end{example}

\begin{algorithm}[t]
\begin{algorithmic}[1]
\caption{Iterative synthesis algorithm}
\label{algo:hierarchical-synthesis}
\Procedure{IterativeSynthesis}{$\gG,\varphi_1,\dots,\varphi_k$}
    \State \textbf{Initialize} $\aW_0\gets\{()\}$, $\xi_0(())\gets\aA$ \Comment{the empty window carries no constraint}
    \For{$i=1,\dots,k$}
        \State $\aU\gets\emptyset$ and $\xi_i(\bar s)\gets\emptyset$ for all $\bar s\in\aS^i$ \Comment{stored lazily with default $\emptyset$}
        \State $\aC_i\gets\{(s_1,\dots,s_i)\in\aS^i \mid \sem{\varphi_i}_{i}(s_1,\dots,s_i)=\top \wedge (s_2,\dots,s_i)\in\aW_{i-1}\}$ \Comment{prune by level $i-1$} \algline{alg2:candidates}
        \For{$(s_1,\dots,s_i)\in\aC_i$}
            \For{$a\in\{a\in\xi_{i-1}(s_2,\dots,s_i)\mid \forall s'\in\dT(s_i,a)\colon (s_2,\dots,s_i,s')\in\aC_i\}$} \algline{alg2:inherited}
                \State $\xi_i(s_1,\dots,s_i)\gets\xi_i(s_1,\dots,s_i)\cup\{a\}$ \Comment{inherited action} \algline{alg2:add-safe}
            \EndFor
            \If{$\xi_i(s_1,\dots,s_i)=\emptyset$} \algline{alg2:empty}
                \State $\aU\gets\aU\cup\{(s_1,\dots,s_i)\}$ \Comment{no safe action remains}
            \EndIf
        \EndFor
        \While{$\aU\neq\emptyset$} \Comment{as \textsc{PropagateConflicts}, on full-length windows} \algline{alg2:prop-start}
            \State $\bar s\gets\mathrm{Pop}(\aU)$
            \For{$a\in\aA$ and $\bar s'\in\Predop^a(\bar s)\cap\aC_i$} \algline{alg2:pred}
                \If{$\xi_i(\bar s')=\{a\}$}
                    \State $\aU\gets\aU\cup\{\bar s'\}$ \Comment{removing $a$ makes $\bar s'$ unsafe}
                \EndIf
                \State $\xi_i(\bar s')\gets\xi_i(\bar s')\setminus\{a\}$ \Comment{avoid forcing an unsafe window} \algline{alg2:remove}
            \EndFor
        \EndWhile
        \State $\aW_i\gets\{\bar s\in\aC_i \mid \xi_i(\bar s)\neq\emptyset\}$ \Comment{winning windows of level $i$} \algline{alg2:winning}
    \EndFor
    \State $\hat\xi_k\gets\xi_k$ \Comment{transient pass: run initialisation} \algline{alg2:transient-start}
    \For{$j=k-1,\dots,1$ \textbf{and} $(s_1,\dots,s_j)\in\aS^j$}
        
\State {\scriptsize $\hat\xi_j(s_1 \dots s_j)\gets\{a\in\aA \mid \sem{\varphi_j}_{j}(s_1\dots s_j)=\top \wedge \forall s'\in\dT(s_j,a)\colon \hat\xi_{j+1}(s_1 \dots s_j,s')\neq\emptyset\}$
} \algline{alg2:transient-end}
    \EndFor
    \State \Return $(\aW_k,\xi_k,\hat\xi_{k-1},\dots,\hat\xi_1)$
\EndProcedure
\end{algorithmic}
\end{algorithm}


\begin{restatable}[Exactness of iterative synthesis]{theorem}{hierarchicalIterative}
\label{thm:hierarchical-iterative}
Let $\varphi_1,\dots,\varphi_k$ be a hierarchical differential safety
condition. For every $i\in\{1,\dots,k\}$,
Algorithm~\ref{algo:hierarchical-synthesis} computes $\aW_i$ and the
restriction of $\xi_i$ to $\aW_i$. In particular, iterative synthesis yields
the same maximally permissive winning strategy for $\gG_{\varphi_k}$ as direct
synthesis, if it exists. Moreover, the strategy $\sigma$ defined by
$\sigma(w)\coloneqq\hat\xi_{|w|}(w)$ for $|w|<k$ and
$\sigma(w)\coloneqq\xi_k(w_{|w|-k+1:|w|})$ for $|w|\geq k$ is the maximally
permissive winning strategy for the turn-based game with the hierarchical
winning condition $\aF_{\varphi_1,\dots,\varphi_k}$. The time complexity is
$\mathcal O\bigl(\sum_{i=1}^{k}|\aC_i|\cdot|\aA|\cdot|\aS|\bigr)
\subseteq \mathcal O(|\aS|^{k+1}\cdot|\aA|)$.
\end{restatable}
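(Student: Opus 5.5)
We prove the theorem by induction on the level $i$, with the invariant that after iteration $i$ the set $\aW_i$ returned on Line~\ref{alg2:winning} equals the true set of winning windows, and $\xi_i$ agrees with the true safe action set on $\aW_i$.

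The base case $i=1$ uses the conventions $\aW_0=\{()\}$ and $\xi_0(())=\aA$. Under these, $\aC_1=\{s\mid \sem{\varphi_1}_1(s)=\top\}$, and the loop is the classical safety fixpoint on $\aS$. Its correctness is standard; alternatively, it follows from Lemma~\ref{lem:naive} and Theorem~\ref{thrm:algo1} specialised to $k=1$.

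For the inductive step, fix $i\geq 2$ and assume $\aW_{i-1}$ and $\xi_{i-1}|_{\aW_{i-1}}$ are correct. Let $\aW_i^{\mathrm{alg}}$ denote the output at level $i$. I show $\aW_i^{\mathrm{alg}}=\aW_i$ in two inclusions.

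\emph{Soundness} ($\aW_i^{\mathrm{alg}}\subseteq\aW_i$). At termination, every window $\bar s\in\aW_i^{\mathrm{alg}}$ satisfies $\varphi_i$ and has a nonempty action set. Every action $a$ remaining in $\xi_i(\bar s)$ sends each successor window into $\aC_i$. Suppose such a successor $\bar s''$ had lost all its actions. Then it was pushed to $\aU$ and later popped, and when it was popped $a$ was removed from every predecessor in $\aC_i$, including $\bar s$. So every successor is again in $\aW_i^{\mathrm{alg}}$. Hence $\aW_i^{\mathrm{alg}}$ is a trap for the environment on which $\varphi_i$ holds, and the strategy "play any remaining action" is winning from it.

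\emph{Completeness} ($\aW_i\subseteq\aW_i^{\mathrm{alg}}$). I argue by induction on the order of removals that no window of $\aW_i$ is ever inserted into $\aU$, and that no action of the true $\xi_i(\bar s)$ is ever removed for $\bar s\in\aW_i$. Lemma~\ref{lem:hierarchical-suffix} gives two facts. First, $\aW_i\subseteq\aC_i$, since the suffix of a winning window is winning at level $i-1$ and $\varphi_i$ must hold at the window itself. Second, $\xi_i(\bar s)\subseteq\xi_{i-1}(s_2,\dots,s_i)$, so every truly safe action is among the inherited actions considered on Line~\ref{alg2:inherited}. Such an action leads only to windows in $\aW_i\subseteq\aC_i$, so it is added initially. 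An action is later removed only because of a popped window, which by the inner induction lies outside $\aW_i$; a truly safe action never leads to such a window, so it is never removed. Combining both inclusions gives correctness of $\aW_i$. Then $\xi_i$ on $\aW_i$ is exactly the set of inherited actions all of whose successors lie in $\aW_i$, which is the definition. The claim about $\gG_{\varphi_k}$ follows by comparison with Theorem~\ref{thrm:algo1}, since both procedures yield $\xi_k$ on the length-$k$ windows.

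For the hierarchical winning condition $\aF_{\varphi_1,\dots,\varphi_k}$, I would do a backward induction on $j=k-1,\dots,1$. The claim is that a history $(s_1,\dots,s_j)$ is winning iff $\hat\xi_j\neq\emptyset$, and that $\hat\xi_j$ is then the maximal set of safe actions. The base is $\hat\xi_k=\xi_k$, which is nonempty exactly on $\aW_k$, the set from which $\varphi_k$ can be maintained; by \eqref{eq:hierarchy} this covers all lower levels from time $k$ on. For $j<k$, the level-$j$ obligation at time $j$ is the check $\sem{\varphi_j}_j$. The history is winning iff that check passes and some action forces all successors into winning length-$(j+1)$ histories, which is exactly the update on Line~\ref{alg2:transient-end}. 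Maximal permissiveness of $\sigma$ then follows because at every history it permits precisely the actions that keep the play winning.

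\emph{Complexity.} Each candidate window contributes $|\aA|$ actions, and each action has at most $|\aS|$ successors to check. During propagation, each window $\bar s$ has predecessors parametrised by one state $s'$ and one action, so the work is $\mathcal O(|\aC_i|\cdot|\aA|\cdot|\aS|)$ per level. Summing over levels and using $|\aC_i|\leq|\aS|^i$ gives the stated bound. The transient pass costs $\mathcal O(\sum_{j<k}|\aS|^j|\aA||\aS|)$, which is within that bound.

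\emph{Main obstacle.} The main obstacle is the completeness direction: making the interleaved induction over removals rigorous. This includes showing that pruned windows outside $\aC_i$, which are never stored explicitly, behave exactly like losing windows. That in turn relies on Lemma~\ref{lem:hierarchical-suffix}, which guarantees that pruning discards only windows that are truly losing.
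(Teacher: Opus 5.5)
Your proposal is correct and is essentially the paper's proof. The paper packages your two inclusions through ``invariant'' subsets of $\aC_i$: it shows that $\aW_i$ is invariant, that every invariant set is contained in $\aW_i$, and that the algorithm keeps exactly the largest invariant set. Your completeness argument is this instantiated at $X=\aW_i$, your soundness argument is it instantiated at $X=\aW_i^{\mathrm{alg}}$, and your transient backward induction is Lemma~\ref{lem:transient}.

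The only loose spot is the comparison with direct synthesis. Algorithm~\ref{algo:1} run on $\varphi_k$ stores histories of length at most $\ord(\varphi_k)=k-1$, not length-$k$ windows, so ``both procedures yield $\xi_k$ on the length-$k$ windows'' is not literally true. You need either a short translation between its winning histories and $\aW_k$ (using that $\xi_k(s_1,\dots,s_k)$ does not depend on $s_1$), or, as the paper does, a direct argument that every action a winning strategy permits at a history whose current window lies in $\aW_k$ belongs to $\xi_k$.
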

In the worst case every lower-level window is winning, $\aC_i=\aS^i$, and the
complexity matches that of direct synthesis (Theorem~\ref{thrm:algo1}), with some overhead introduced by the iterative logic.
In practical cases, however, we expect the lower levels to already remove large parts of the window space, leading to a practical efficiency gain. We demonstrate this in our experimental evaluation.

\section{Experimental Evaluation}
\label{sec:experiments}

We evaluate the synthesis procedures on vehicle models on a 2D discretized grid, using an explicit-state prototype implementation.
The goal of our evaluation is to demonstrate the feasibility of our approach, the types of traces produced by differential safety properties with increasing orders, and to investigate how different synthesis methods compare in terms of time and memory usage.

\subsection{Experimental Setup}



\paragraph{Two-dimensional car game.}
Our experiments instantiate the differential safety-game construction on a
discretized two-dimensional car model, inspired by a vehicle that moves on a
bounded grid while approaching a wall or an obstacle.  The state space is
  $\aS = \{0,\ldots,X\}\times \{0,\ldots,Y\}\subseteq \mathbb{Z}^2$,
where a state \(q_t=(x_t,y_t)\) is the position of the vehicle at time \(t\).
At each step the controller chooses a commanded displacement
$  u=(u_x,u_y)\in \aA :=
  \{0,\ldots,V_x\}\times\{0,\ldots,V_y\}$.
The environment then chooses an additive disturbance
\[
  \delta=(\delta_x,\delta_y)\in
  [-\eta_x(u),\eta_x(u)]\times[-\eta_y(u),\eta_y(u)]\cap\mathbb{Z}^2,
  \qquad
  \eta_i(u)=\min\{\eta_{\max},u_i\},
\]
and the next state is \(q_{t+1}=q_t+u+\delta\) (clipped to the state space).  
Actions represent control commands, while nondeterminism models uncertainty.
The game has an absorbing end state at the top-right corner.
Positional unsafety is defined by a wall \(x>x_{\mathsf{wall}}\) that must not be crossed.
higher-order safety constraints are given by bounds on the magnitude of each derivative, 
with the timestep normalized to $\Delta \tau =1$.
The $k$-th order constraint is represented as $\|\partial^k\|_2\leq r_k$, for some radius $r_k$.
Figure~\ref{fig:2d-car-game} illustrates the setup.
Each instance is parametrized by the space and speed bounds $(X,Y)$, $(V_x, V_y)$, the maximum disturbance $\eta_{\max}$, and the sequence of radii for the higher-order properties $[r_1,\dots, r_k]$.
We will use the notation $E^{(X,Y), (V_x, V_y), \eta_{\max}}_{[r_1,\dots, r_k]}$ to denote an instance, with the corresponding parameter values.
All experiments were performed with a Macbook (M2)\footnote{The source code to reproduce the experimental evaluation is available at \url{https://github.com/filipcano/differential-safety-shields}.}.

\begin{figure}[t]
  \centering
  \includegraphics[width=0.5\linewidth]{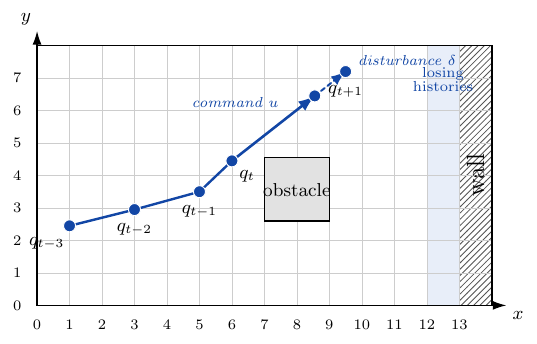}
  \caption{Two-dimensional car game used in the experiments.  
  }
  \label{fig:2d-car-game}
\end{figure}

\subsection{Synthesis cost}
We build a benchmark with a diverse set of parameters, and report the computing time and maximum memory usage of each synthesis procedure in Table~\ref{tab:synth-cost-main}. 
The timeout is $60s$. We did not include a memory limit, as no instance used too much memory before the timeout.
We also report the fraction of initial states that the iterative algorithm prunes at the beginning of each iteration. In general, both optimized solutions -- direct and iterative -- heavily outperform the basic method. 
Furthermore, we observe that the iterative method outperforms the direct one when the number of pruned states is large. This is consistent with our intuition.

\begin{table*}
\centering
\setlength{\tabcolsep}{0.8\tabcolsep}
\caption{Synthesis Cost: Time and Memory}
\label{tab:synth-cost-main}
\begin{tabular}{lccc|ccc|rrr}
\toprule
\multirow{2}{*}{Benchmark}
& \multicolumn{3}{c|}{Synth. Time (s)}
& \multicolumn{3}{c|}{Synth. Memory (MB)}
& \multicolumn{3}{c}{Pruned States (\%)}\\
\cline{2-10}
& Baseline & Direct & Iterative
& Baseline & Direct & Iterative
& $d = 3$ & $d = 4$ & $d = 5$\\
\midrule
$E^{(40, 40), (4, 4), 1}_{[4, 5, 7]}$
& 39.7 & 1.92 & \textbf{1.37}
& 2420.2 & 220.1 & \textbf{219.2}
& 0.0\% & - & - \\

$E^{(60, 60), (8, 4), 1}_{[4, 5, 7]}$
& TO & \textbf{3.7} & 3.9
& TO & \textbf{550.3} & 565.9
& 0.0\% & - & - \\

$E^{(470, 120), (12, 6), 1}_{[5, 2, 8, 10]}$
& TO & 48.1 & \textbf{19.1}
& TO & 2465.1 & \textbf{956.4}
& 89.5\% & 89.7\% & - \\

$E^{(540, 120), (12, 6), 1}_{[5, 2, 8, 10]}$
& TO & TO & \textbf{24.3}
& TO & TO & \textbf{883.4}
& 89.5\% & 89.7\% & - \\

$E^{(540, 120), (16, 6), 1}_{[6, 3, 9, 10]}$
& TO & TO & TO
& TO & TO & TO
& 0.0\% & 0.0\% & - \\

$E^{(20, 20), (4, 2), 1}_{[4, 4, 6, 6, 20]}$
& 52.8 & 8.1 & \textbf{2.54}
& 4661.3 & 788.8 & \textbf{131.5}
& 0.0\% & 56.8\% & 90.0\% \\

$E^{(20, 20), (4, 2), 1}_{[3, 3, 6, 10, 20]}$
& 46.5 & \textbf{5.44} & 6.52
& 4517.5 & \textbf{607.3} & 613.9
& 0.0\% & 0.0\% & 58.7\% \\

$E^{(30, 20), (4, 2), 1}_{[3, 3, 4, 4, 20]}$
& 4.27 & 1.17 & \textbf{0.295}
& 378.3 & 114.2 & \textbf{28.5}
& 0.0\% & 66.8\% & 92.9\% \\

$E^{(80, 20), (6, 4), 1}_{[5, 2, 8, 10, 12]}$
& 20.7 & 2.82 & \textbf{1.08}
& 1229.1 & 206.1 & \textbf{68.8}
& 88.6\% & 88.7\% & 88.3\% \\

$E^{(120, 20), (6, 2), 1}_{[5, 2, 8, 10, 12]}$
& 27.2 & 3.29 & \textbf{1.34}
& 1776.1 & 279.4 & \textbf{105.4}
& 85.2\% & 85.4\% & 84.9\% \\
\bottomrule
\end{tabular}
\end{table*}

\subsection{Qualitative Trace Evaluation}
To demonstrate how the shields work, we synthesize shields for increasing orders and simulate several traces of a shielded agent on two instances: \texttt{Wall}, parametrized as $E^{(100, 18), (6,6), 2}_{[8,4,6,9]}$ and \texttt{Obstacle}, parametrized as $E^{(110, 24), (10,7), 0}_{[10,4,3,3]}$, with an obstacle centred at position $(65,4)$. 
For the simulation, we use an agent policy that favours moving along the $x$-axis and maintaining a target speed, and implement the safety as pre-shields: the agent can only choose among actions allowed by the shield. As it approaches either the obstacle or the wall, the agent is forced to either brake or steer away from it, and typically favours steering, as it can maintain the target speed. Further details on how the agent's policy works are in Appendix~\ref{app:2d-car-model}.
Fig.~\ref{fig:traces} illustrates the instances and simulated traces of increasing orders.
In both cases, observe that as the order of the shield increases, the traces tend to stay away from large speeds and sharp turns.

\paragraph{Derivative pressure.}
To visualize how different shields constrain the agent differently, we introduce the concept of \emph{derivative pressure}. 
For a derivative constraint of order $d$ with radius $r_d$, we define the
\emph{pressure} at time $t$ as
  $\rho_d(t) = \|\partial^d x_t\|/r_d$.
Thus $\rho_d(t) < 1$ means that the trajectory is strictly inside the
$d$-th order safety bound, $\rho_d(t)=1$ means that it lies on the boundary,
and $\rho_d(t)>1$ indicates a violation. This normalization makes constraints
of different orders directly comparable: values above one measure both the occurrence of a violation and its relative magnitude.
In Fig.~\ref{fig:pressure-main}, we show, for increasing orders of the constraint, the derivative pressure corresponding with each shield, for the \texttt{Wall} instance.
Observe that shields for a higher order than the constraint always stay below the bound, and in general, higher order shields tend to be more conservative in all constraints orders.
Fig.~\ref{fig:pressure-main} depicts that agents with lower order shields tend to end their traces sooner, as they go faster.


\begin{figure}
     \centering
     \begin{subfigure}[b]{0.48\textwidth}
         \centering
         \includegraphics[width=\textwidth]{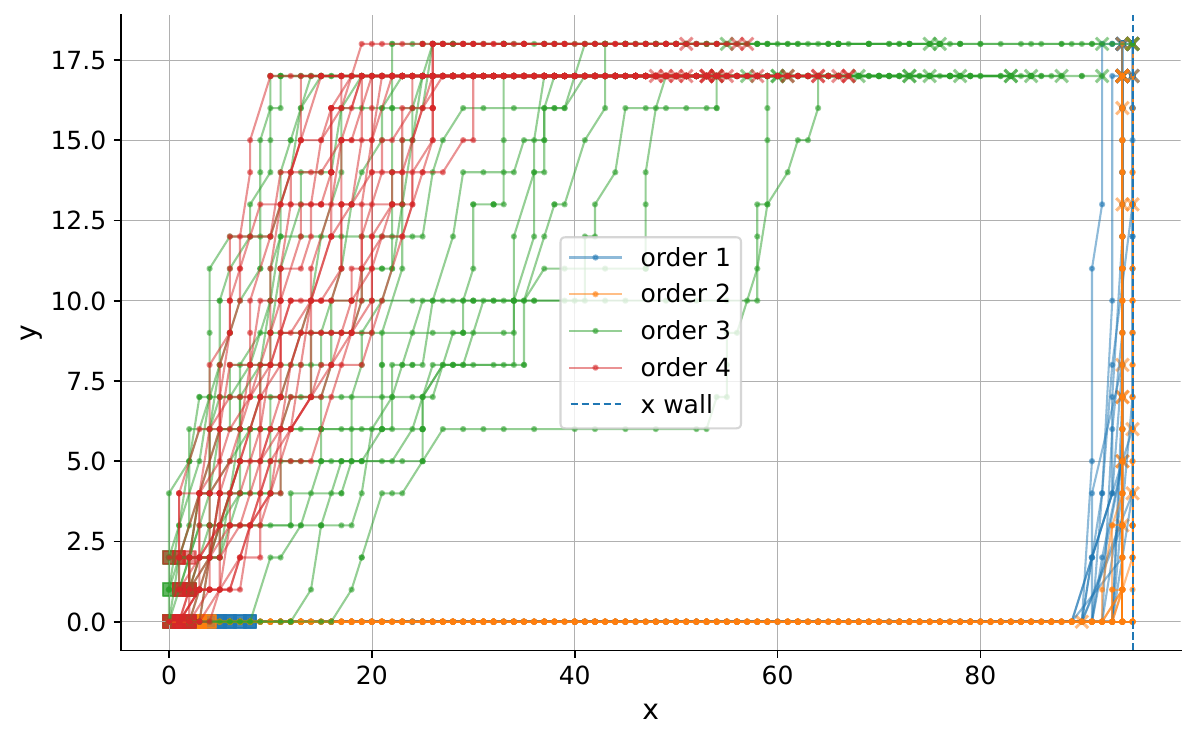}
     \end{subfigure}
     \hfill
     \begin{subfigure}[b]{0.48\textwidth}
         \centering
         \includegraphics[width=\textwidth]{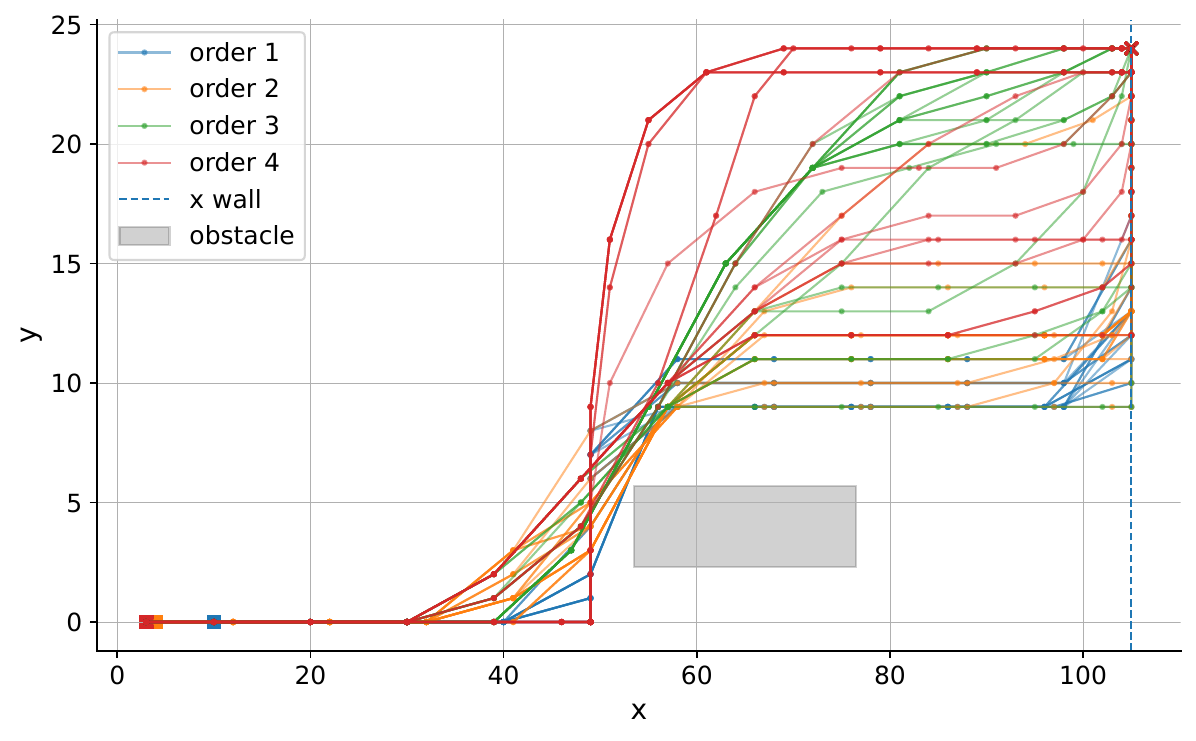}
     \end{subfigure}
        \caption{Trace demonstration}
        \label{fig:traces}
\end{figure}

\begin{figure}[t]
    \centering
    \includegraphics[width=0.92\linewidth]{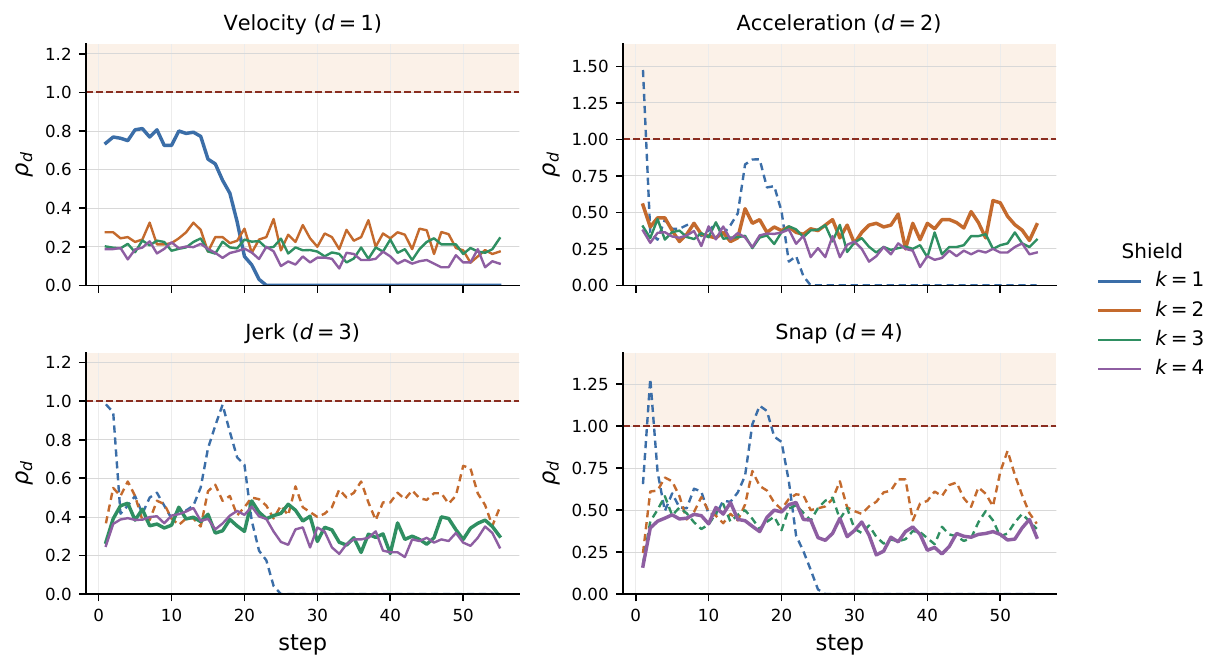}
    \caption{Derivative pressure by shield and constraint order for the \texttt{Wall} instance.}
    \label{fig:pressure-main}
\end{figure}




\section{Related Work}
\label{sec:related-work}

\paragraph{Shielding, runtime enforcement, and games.}
Shield synthesis enforces safety at runtime by computing the winning region of a
safety game and blocking, or correcting, actions that may leave it
\cite{bloem2015shield}. Shields have been used to make reinforcement learning
safe during training and deployment \cite{alshiekh2018safe}, and have been
extended to probabilistic \cite{Jansen2020ProbabilisticShields} and partially
observable settings \cite{Carr2023PartialObservability}. More broadly, shields
are reactive runtime-enforcement mechanisms: unlike general edit automata
\cite{Schneider2000Enforceable,LigattiBauerWalker2005EditAutomata,Falcone2011RuntimeEnforcement},
they must react online and interfere only when necessary. The underlying notions
of maximally permissive strategies and safety-game fixpoints are classical
\cite{RamadgeWonham1987Supervisory,Bernet2002Permissive,thomas1995synthesis,GradelThomasWilke2002Games}.
Closest to our construction are approaches that augment the game state with
finite memory, for instance to handle delayed interaction or delayed observation
\cite{chen2018delayedInteraction,cano2023delayedShielding}. Our memory has a
different source: it is the finite window required to evaluate discrete
derivatives. Differential safety properties are $\omega$-regular and could
therefore be handled by generic automata-theoretic constructions, but our
finite-difference structure yields an explicit memory bound, a direct history-game
construction, and an exact level-by-level pruning algorithm.

\paragraph{Numeric and cyber-physical shields.}
Several works extend shielding beyond Boolean transition systems. Real-valued
shields synthesize enforcers for cyber-physical systems with real-valued signals
by combining compatibility analysis with safety-game solving
\cite{Wu2019ShieldSynthesisReal}, while shield synthesis for LTL modulo theories
allows temporal specifications over theory atoms
\cite{Rodriguez2025ShieldLTLModuloTheories}. These approaches also address
data-rich behaviours. In contrast, differential safety properties deliberately
focus on backward finite differences over a finite abstraction. This restriction fixes the required memory, which enables pruning across position,
velocity, acceleration, jerk, and higher-order constraints.

\paragraph{Safety filters, barrier functions, and trajectory generation.}
In continuous control, derivative constraints are commonly enforced by control
barrier functions and safety filters
\cite{Ames2019ControlBarrier,Hsu2024SafetyFilter}, including predictive safety
filters \cite{WabersichZeilinger2021PredictiveSafetyFilter} and reachability-based
safe learning methods
\cite{Fisac2019GeneralSafetyFramework,Akametalu2014ReachabilitySafeLearning}.
Particularly relevant are high-relative-degree and high-order control barrier
functions, which handle constraints whose satisfaction depends on derivatives of
the system output
\cite{NguyenSreenath2016ExponentialCBF,XiaoBelta2022HighOrderCBF}. Derivative
bounds also arise in online trajectory generation, including jerk-limited motion
planning \cite{KrogerWahl2010OTG,BerscheidKroger2021Ruckig}. These methods
typically rely on a continuous dynamics model and often solve online
optimisation problems. Our approach instead solves a finite adversarial game
offline and obtains worst-case, maximally permissive shields for the chosen
abstraction.

\paragraph{Specification languages and window objectives.}
Signal temporal logic constrains real-valued signals over continuous time and
supports quantitative monitoring and falsification
\cite{MalerNickovic2004STL,FainekosPappas2009Robustness,DonzeMaler2010RobustSatisfaction,Deshmukh2017RobustOnlineSTL}.
Differential safety properties can be viewed as a restricted safety fragment over
backward finite differences: less expressive than STL, but tailored to
bounded-memory enforcement. The use of sliding windows also connects to window
objectives in games, such as window mean-payoff, total-payoff, and timed window
objectives \cite{chatterjee2015looking,MainRandourSproston2021TimedWindowParity}.
There, windows provide bounded-response quantitative guarantees; here, the
window is the finite history needed to evaluate derivatives. This difference fixes
the exact memory required by the shield and exposes the closure property required for iterative synthesis.

\section{Conclusion}
\label{sec:conclusion}
We introduced differential safety properties, which specify constraints on discrete derivatives, and analyse their expressiveness. 
We propose two shield synthesis algorithms: a \emph{direct} memory optimal algorithm and an iterative  algorithm for hierarchical differential safety properties which permits successive pruning. We implement higher-order safety shields to demonstrate the feasibility of our approach and the performance improvement of the proposed synthesis algorithms with respect to the baseline.

Several directions remain open. 
First, symbolic fixed point methods could be used to improve upon the explicit state enumeration present in our synthesis algorithms.
Second, quantitative shields could trade interference against safety margins on the derivatives.
Third, the impact of the discrete shield on the underlying continuous dynamics should be subject of study. 
Fourth, differential constraints could be combined with other sliding-window specifications, such as window payoff objectives \cite{chatterjee2015looking}.

\subsubsection{\ackname} This work was supported by the European Research Council under Grant No.: ERC-2020-AdG 101020093. This work was also supported by the ISTA Responsible AI Program, made possible through the support of Garrett Camp and the Camp Foundation.



\bibliographystyle{splncs04}
\bibliography{references}

\appendix

\section{Definitions}
\label{sec:app:definitions}

\paragraph{Shields.}
There are two types of shields: a pre-shield that disables precisely those actions that are not permitted, and a post-shield that may either keep the proposed action unchanged or replace it by other actions.
A pre-shield is a function $\sigma_{\mathrm{pre}}\colon \aS^*\to 2^{\aA}$ that restricts the actions available to the controller before an action is chosen. Given a controller strategy $\xi$, the corresponding pre-shielded strategy $(\xi \circ \sigma_{\mathrm{pre}})\colon \aS^*\to 2^{\aA}$ is defined for every finite sequence $w\in \aS^*$ as
\begin{align*}
( \xi \circ \sigma_{\mathrm{pre}})(w)
\coloneqq
\xi(w)\cap \sigma_{\mathrm{pre}}(w).
\end{align*}
A post-shield is a function $\sigma_{\mathrm{post}}\colon \aS^*\times \aA\to 2^{\aA}$ that observes the action proposed by the controller and returns the set of actions that may be executed instead. Given a controller strategy $\xi$, the corresponding post-shielded strategy $(\sigma_{\mathrm{post}} \circ \xi)\colon \aS^*\to 2^{\aA}$ is defined for every finite sequence $w\in \aS^*$ as
\begin{align*}
( \sigma_{\mathrm{post}} \circ \xi)(w)
\coloneqq
\bigcup_{a\in \xi(w)} \sigma_{\mathrm{post}}(w,a).
\end{align*}
A shield is sound if the induced shielded strategy is winning. A sound shield is minimally interfering if it disables or replaces an action only when executing that action would make it impossible to guarantee the winning condition. It is well known that, whenever a maximally permissive winning strategy exists, it implements minimally interfering shields: the shield permits precisely the actions allowed by the maximally permissive winning strategy~\cite{bloem2015shield}. In particular, a maximally permissive winning strategy $\xi$ directly serves as a pre-shield, $\sigma_{\mathrm{pre}}\coloneqq\xi$, and induces a canonical post-shield that keeps the proposed action whenever it is permitted and otherwise offers the permitted alternatives,
\begin{align*}
\sigma_{\mathrm{post}}(w,a)
\coloneqq
\begin{cases}
\{a\} & \text{if } a\in\xi(w),\\
\xi(w) & \text{otherwise.}
\end{cases}
\end{align*}
Hence, from the point of view of synthesis, pre- and post- shields are the same, as both are derived from the maximally permissive winning strategy. In the experiments we use pre shields.

\section{Remarks}
\label{sec:app:remarks}

\paragraph{Scaling and the role of $\Delta\tau$.}
The finite-difference definition divides by $(\Delta\tau)^k$. If the safety predicates introduced below are evaluated in an integer-valued constraint language, these divisions can be eliminated by rescaling: measuring time in units of one sampling period yields $\Delta\tau=1$, and a physical bound on an order-$k$ derivative translates by multiplying with $(\Delta\tau)^k$, e.g.,
\begin{align*}
  L \leq \partial_t^k(w) \leq U
  \quad\text{becomes}\quad
  (\Delta\tau)^k L \leq \sum_{i=0}^{k}(-1)^i\binom{k}{i}s_{t-i} \leq (\Delta\tau)^k U.
\end{align*}
If positions are stored in grid units with physical spacing $\eta$, every order-$k$ derivative is additionally scaled by $\eta$.

\paragraph{Representation and complexity.}
Since $\lnot(\partial^k\in\aR)\equiv\partial^k\in\RN^d\setminus\aR$, negations can be pushed into the atoms, so atoms may be assumed positive. For the complexity statements below, we assume that the safe sets $\aR$ are represented such that membership is decided in constant time, e.g., as intervals or finite unions of boxes; the algorithms then evaluate $\sem{\varphi}_t$ in time $\mathcal O(|\varphi|)$ per window, which we treat as constant.

\paragraph{Lipschitz constant.}
Suppose every $t\geq 2$ of a sequence $w$ satisfies $\partial^1_t(w)\in\{v\in\RN^d \mid \|v\|\leq v_{\max}\}$. Then, by the triangle inequality, $\|s_t-s_{t'}\|\leq v_{\max}\cdot\Delta\tau\cdot|t-t'|$ for all indices $t,t'$, i.e., $w$ is the sampling of a $v_{\max}$-Lipschitz curve. A shield enforcing a first-order bound therefore bounds the ``time'' Lipschitz constant of every shielded controller, regardless of the controller's internal complexity, a prerequisite for many verification techniques for systems with learned components~\cite{huang2019reachnn,fan2020reachnnstar,zhang2023reachability}.

\paragraph{Initial steps.}
The global semantics of an order-$k$ property constrains a sequence only from time $k+1$ onwards. For conjunctions of atoms of different orders, this under-constrains the first steps of a sequence: in Example~\ref{ex:car2} below, position safety is not required at times $1$ and $2$, even though $\partial^0\in\aS\setminus\{a,b\}$ is a conjunct that could already be evaluated there.

\section{Proofs}

\paragraph{Notation.}
By Lemma~\ref{lem:windows}(2), for every differential safety property
$\varphi$ of order $k$, the value $\sem{\varphi}_t(w)$ depends only on the
window $(w_{t-k},\dots,w_t)$; moreover, since the derivatives
$\partial^0_t,\dots,\partial^k_t$ are the same finite differences of the last
$k+1$ states at every index, the value is independent of the position of the
window within the sequence,
\begin{align}
    \label{eq:translation-invariance}
    \sem{\varphi}_t(w)=\sem{\varphi}_{k+1}(w_{t-k},\dots,w_t)
    \qquad\text{for all } k+1\leq t\leq|w|.
\end{align}
In particular, for the hierarchical properties of
Section~\ref{sec:hierarchical-safety} with $\ord(\varphi_i)=i-1$, we have
$\sem{\varphi_i}_t(w)=\sem{\varphi_i}_{i}(w_{t-i+1},\dots,w_t)$ for all
$t\geq i$. We use this translation invariance throughout.

\subsection{Proofs of Section~\ref{sec:dsp}}

\derivativeClosedForm*

\begin{proof}
By induction on $k$. For $k=0$, the formula yields $\partial_t^0(w)=s_t$ for
all $0<t\leq|w|$, matching the definition. For the inductive step, let
$k\geq 1$ and $k<t\leq|w|$. Both $\partial_t^{k-1}(w)$ and
$\partial_{t-1}^{k-1}(w)$ are defined, since $k-1<t-1$ and $t-1\leq|w|$.
Applying the induction hypothesis to both and shifting the summation index of
the second sum by one,
\begin{align*}
\partial_t^k(w)
&=\frac{1}{\Delta\tau}\Bigl(\partial_t^{k-1}(w)-\partial_{t-1}^{k-1}(w)\Bigr)\\
&=\frac{1}{(\Delta\tau)^k}\Bigl(
\sum_{i=0}^{k-1}(-1)^i\binom{k-1}{i}s_{t-i}
-\sum_{i=1}^{k}(-1)^{i-1}\binom{k-1}{i-1}s_{t-i}
\Bigr)\\
&=\frac{1}{(\Delta\tau)^k}
\sum_{i=0}^{k}(-1)^i\Bigl(\binom{k-1}{i}+\binom{k-1}{i-1}\Bigr)s_{t-i}
=\frac{1}{(\Delta\tau)^k}\sum_{i=0}^{k}(-1)^i\binom{k}{i}s_{t-i},
\end{align*}
using the conventions $\binom{k-1}{-1}=\binom{k-1}{k}=0$ and Pascal's rule.
Conversely, for $t\leq k$ the recursive definition refers to
$\partial^0_{t-k}(w)$ with $t-k\leq 0$, and for $t>|w|$ to states beyond $w$,
so $\partial^k_t(w)$ is undefined in both cases.
\end{proof}

\windowPredicates*

\begin{proof}
(2) By induction on the structure of $\varphi$: an atom $\partial^j\in\aR$
with $j\leq k$ depends, by Lemma~\ref{lem:closed-form}, only on
$s_{t-j},\dots,s_t$, which are contained in the window, and Boolean
combinations preserve this property.

(1) Consider the map $\iota\colon\aS^{k+1}\to(\RN^d)^{k+1}$ that sends a window
$\bar v=(v_0,\dots,v_k)$, read as the states $s_{t-k},\dots,s_t$, to its
derivative values $\iota(\bar v)\coloneqq(d_0(\bar v),\dots,d_k(\bar v))$,
where $d_j(\bar v)$ is the value of $\partial^j_t$ on the window. The map
$\iota$ is injective: $d_0$ determines $s_t$, and, given
$s_t,\dots,s_{t-j+1}$, the closed form of Lemma~\ref{lem:closed-form} shows
that $d_j$ determines $s_{t-j}$, since $s_{t-j}$ occurs in $d_j$ with the
nonzero coefficient $(-1)^j/(\Delta\tau)^j$. For a window $\bar v$, define
\begin{align*}
\chi_{\bar v}\coloneqq\bigwedge_{j=0}^{k}\bigl(\partial^j\in\{d_j(\bar v)\}\bigr).
\end{align*}
Then, for every $w\in\aS^\infty$ and $k+1\leq t\leq|w|$, we have
$\sem{\chi_{\bar v}}_t(w)=\top$ iff the derivative values of the window of $w$
at $t$ equal $\iota(\bar v)$, which by injectivity holds iff the window equals
$\bar v$. The property
$\varphi_P\coloneqq\lnot\bigl(\bigwedge_{\bar v\in P}\lnot\chi_{\bar v}\bigr)$
is therefore satisfied at $t$ iff the window of $w$ at $t$ lies in $P$; the
conjunction is finite since $\aS$ is finite, and $\varphi_P$ has order $k$
because it contains atoms of order $k$ and none of higher-order. (For
$P=\emptyset$, take $\varphi_P\coloneqq\partial^k\in\emptyset$.)
\end{proof}

\subsection{Proofs of Section~\ref{sec:synthesis}}

\naive*

\begin{proof}
(1) Define the \emph{good} states of the history game as
\begin{align*}
G\coloneqq\aS^{\leq k}
\,\cup\,
\{(s_1,\dots,s_{k+1})\in\aS^{k+1} \mid \sem{\varphi}_{k+1}(s_1,\dots,s_{k+1})=\top\}.
\end{align*}
Let $w=s_1s_2\cdots\in\aS^\omega$ be a path of $\gG_\varphi$ and consider its
lifted history path $(w_{\max\{1,t-k\}:t})_{t\in\pNN}$. At time $t$, the
history state has length $\min\{t,k+1\}$; for $t\leq k$ it is good by
definition, and for $t\geq k+1$ it equals the window $w_{t-k:t}$ and is good
iff $\sem{\varphi}_{k+1}(w_{t-k:t})=\sem{\varphi}_t(w)=\top$ by
Equation~\eqref{eq:translation-invariance}. Hence the lifted path remains in
$G$ forever iff $\sem{\varphi}_t(w)=\top$ for all $t\geq k+1$, i.e., iff
$w\in\aF_\varphi$. Therefore $\aF^k_\varphi$ is the state invariance condition
``always $G$'' on $\aS^{\leq k+1}$.

(2) The map $w\mapsto(w_{\max\{1,t-k\}:t})_{t\in\pNN}$ is a bijection between
the paths of $\gG_\varphi$ starting in $\sinit$ and the paths of
$\gG^k_\varphi$ starting in $(\sinit)$, and it is compatible with actions: by
definition of $\dT^k$, the history at time $t+1$ is obtained from the history
at time $t$ and the state $s_{t+1}\in\dT(s_t,a)$. A memoryless strategy
$\xi^k$ for $\gG^k_\varphi$ and its induced strategy
$\xi(w_{1:t})\coloneqq\xi^k(w_{\max\{1,t-k\}:t})$ for $\gG_\varphi$ permit the
same actions along corresponding (finite or infinite) sequences, so the
consistent paths correspond under the bijection, the nonblocking conditions
coincide, and, by~(1), a consistent path of one game is winning iff the
corresponding path of the other is. Hence $\xi^k$ is winning for
$\gG^k_\varphi$ iff $\xi$ is winning for $\gG_\varphi$.
\end{proof}

\AlgorithmOne*

\begin{proof}
Throughout, write $\xi^{\mathrm{alg}}$ for the action sets returned by
Algorithm~\ref{algo:1}, and recall that a history $(s_1,\dots,s_j)$ is
path-connected if $s_l\to s_{l+1}$ for all $l<j$.

\emph{Winning histories.}
For a history $h=(s_1,\dots,s_j)\in\aS^{\leq k}$ with $j\geq 1$, say that $h$
is \emph{winning} if the controller has a strategy that always permits at
least one action and guarantees $\sem{\varphi}_t(w)=\top$ for all $t\geq k+1$
along every consistent infinite extension $w=s_1\cdots s_j s_{j+1}\cdots$;
write $W^j$ for the winning histories of length $j$. By
Equation~\eqref{eq:translation-invariance}, the obligations that remain along
a play of $\gG_\varphi$ after time $t$ depend only on the last $\min\{t,k\}$
states of the play: the windows still to be evaluated consist of these states
and of future states only. Hence the controller can still enforce $\varphi$
from a given play iff the history formed by its last $\min\{t,k\}$ states is
winning. For $h$ of length $j=k$, define
\begin{align*}
\xi^*(h)\coloneqq
\{a\in\aA \mid \forall s'\in\dT(s_k,a)\colon \sem{\varphi}_{k+1}(h,s')=\top
\,\wedge\,(s_2,\dots,s_k,s')\in W^k\},
\end{align*}
and for $h$ of length $j<k$, define
$\xi^*(h)\coloneqq\{a\in\aA \mid \forall s'\in\dT(s_j,a)\colon (h,s')\in W^{j+1}\}$.

\emph{Step 0: local characterisation.}
For every $h\in\aS^{\leq k}$, we claim that $h$ is winning iff
$\xi^*(h)\neq\emptyset$, and that $\xi^*(h)$ consists precisely of the actions
permitted at $h$ by some witnessing strategy. The arguments are those of
Lemma~\ref{lem:local-winning}: for the ``only if'' directions, fix a witness
$\xi$ and $a\in\xi(h)$; for $j<k$, the restriction of $\xi$ to extensions of
$(h,s')$ witnesses $(h,s')\in W^{j+1}$ for every successor $s'$, and for
$j=k$, the first obligation $\sem{\varphi}_{k+1}(h,s')=\top$ holds along every
consistent extension, while the strategy shifted by one position (deleting the
oldest state, exactly as in Lemma~\ref{lem:local-winning}) witnesses
$(s_2,\dots,s_k,s')\in W^k$. For the ``if'' directions, glue the successor
witnesses behind the action $a$, again as in Lemma~\ref{lem:local-winning}.

\emph{Step 1: invariant families.}
Call a family $X=(X_1,\dots,X_k)$ with $X_j\subseteq\aS^j$ \emph{invariant} if
every $h\in X_j$ admits an action $a$ such that all $a$-successor histories of
$h$ lie in $X$, where the $a$-successor histories of $h=(s_1,\dots,s_j)$ are
$\{(s_1,\dots,s_j,s') \mid s'\in\dT(s_j,a)\}$ if $j<k$ and
$\{(s_2,\dots,s_k,s') \mid s'\in\dT(s_k,a)\}$ if $j=k$, and where additionally
$\sem{\varphi}_{k+1}(s_1,\dots,s_k,s')=\top$ is required for all
$s'\in\dT(s_k,a)$ if $j=k$. By Step~0, the family $(W^1,\dots,W^k)$ is
invariant. Conversely, every invariant family consists of winning histories:
from $h\in X_j$, the strategy that, at every history whose last
$\min\{t,k\}$ states form an element of $X$, permits exactly the actions whose
$a$-successor histories remain in $X$ (and permits all actions elsewhere)
always permits an action by invariance, and along every consistent extension
all windows evaluated from time $k+1$ onwards satisfy $\varphi$ by the
$j=k$ clause and Equation~\eqref{eq:translation-invariance}.

\emph{Step 2: the algorithm computes the largest invariant family on
path-connected histories.}
After initialisation, $\xi^{\mathrm{alg}}(h)=\aA$ for $|h|<k$ and
$\xi^{\mathrm{alg}}(h)=\{a \mid \forall s'\in\dT(s_k,a)\colon
\sem{\varphi}_{k+1}(h,s')=\top\}$ for $|h|=k$. By inspection of
\textsc{PropagateConflicts}, an action $a$ is removed from a history $h$
exactly when some $a$-successor history of $h$ is popped from $\aU$, with one
exception: a pop of $(s_1,\dots,s_k)$ removes $a$ from a sliding predecessor
$(s',s_1,\dots,s_{k-1})$ only if $s'\to s_1$. This exception only concerns
histories that are not path-connected; moreover, a pop of a history that is
not path-connected never removes actions from a path-connected history, since
every removal branch requires the transition $s_{j-1}\xrightarrow{a}s_j$ for
the last link, and every predecessor it touches contains all internal links of
the popped history except possibly the last. Hence, restricted to
path-connected histories, removals happen exactly at pops of $a$-successor
histories, and histories are popped exactly when their action sets become
empty (Lines~\ref{alg:empty-check} and~\ref{alg:pred-empty-check}).
The induction of Step~3 in the proof of
Theorem~\ref{thm:hierarchical-iterative} applies verbatim: for every invariant
family $X$ of path-connected histories, no element of $X$ is ever popped and
no witnessing action is ever removed; conversely, the surviving family
$(\{h \mid |h|=j,\ \xi^{\mathrm{alg}}(h)\neq\emptyset\})_{j\leq k}$, restricted
to path-connected histories, is invariant. Hence, on path-connected histories,
the surviving family equals $(W^1,\dots,W^k)$ and, by the same intersection
computation, $\xi^{\mathrm{alg}}(h)=\xi^*(h)$ for every path-connected winning
history $h$.

\emph{Step 3: conclusion.}
The returned strategy
$\sigma(w_{1:t})\coloneqq\xi^{\mathrm{alg}}(w_{\max\{1,t-k+1\}:t})$ reads only
the last $\min\{t,k\}$ states, so it is $k$-history dependent, and every
history it is applied to along a play is path-connected. The history
$(\sinit)$ is winning iff $\gG_\varphi$ admits a winning strategy. If so, then
$\sigma$ is winning: along every $\sigma$-consistent play, the current history
remains in the surviving family, so $\sigma$ permits an action, and all
evaluated windows satisfy $\varphi$ by the $j=k$ clause. Moreover, $\sigma$
subsumes every winning strategy $\zeta$: if $a\in\zeta(w_{1:t})$ for a
$\zeta$-consistent play prefix, then, by Step~0 applied to the restrictions
and shifts of $\zeta$, the action $a$ lies in $\xi^*$ of the current history,
which equals $\sigma(w_{1:t})$. Hence $\sigma$ is maximally permissive.

\emph{Complexity.}
The initialisation inspects every length-$k$ history, action, and successor in
time $\mathcal O(|\aS|^{k}\cdot|\aA|\cdot|\aS|)$. During propagation, every
pair of a history and an action is removed at most once and every history is
popped at most once; each pop inspects at most $|\aA|\cdot(|\aS|+1)$
predecessor pairs. The total is $\mathcal O(|\aS|^{k+1}\cdot|\aA|)$.
\end{proof}

\reachabilityPruning*

\begin{proof}
Path-connected histories are closed under taking $a$-successor histories:
appending a successor state adds the link $s_j\to s'$, and, for $j=k$,
deleting the oldest state only removes a link. As observed in Step~2 of the
proof of Theorem~\ref{thrm:algo1}, a pop of a history that is not
path-connected never removes an action from a path-connected history.
Consequently, the run of Algorithm~\ref{algo:1} restricted to path-connected
histories---skipping all other histories in the initialisation loops and in
the predecessor sets---performs exactly the subsequence of operations of the
unrestricted run that touch path-connected histories, and therefore computes
the same action sets on all of them.

For the complexity, the number of path-connected histories of length $j$ is at
most $|\aS|\cdot B^{j-1}$, since each state has at most $B$ successors. The
initialisation inspects every path-connected length-$k$ history, action, and
successor, in time $\mathcal O(|\aS|\cdot B^{k-1}\cdot|\aA|\cdot B)$. During
propagation, each pop inspects its short prefix and at most $B$ sliding
predecessors $(s',s_1,\dots,s_{k-1})$ with $s'\to s_1$, for each of the
$|\aA|$ actions, and every history is popped at most once. The total is
$\mathcal O(|\aS|\cdot B^{k}\cdot|\aA|)$.
\end{proof}

\hardness*

\begin{proof}
Fix $k\geq 2$ and $\Delta\tau=1$. We construct a game in which the environment
chooses a bit, the play traverses a corridor of $k-1$ cells, and the
controller must announce the bit at the end of the corridor.

\emph{Construction.}
Let $\aS\coloneqq\{0,1,\dots,k+2\}\subseteq\ZN$, where $0$ and $1$ are the
\emph{bit} values, $2,\dots,k$ are the \emph{corridor} cells, and
$k+1$ and $k+2$ are the \emph{output} cells announcing bit $0$ and bit $1$,
respectively. Let $\aA\coloneqq\{0,1\}$ and define, for all $a\in\aA$,
\begin{align*}
\dT(b,a)&\coloneqq\{2\} \text{ for } b\in\{0,1\}, &
\dT(v,a)&\coloneqq\{v+1\} \text{ for } 2\leq v\leq k-1,\\
\dT(k,a)&\coloneqq\{k+1+a\}, &
\dT(k+1,a)&\coloneqq\dT(k+2,a)\coloneqq\{0,1\},
\end{align*}
with $\sinit\coloneqq k+1$. Thus every play has the shape
\begin{align*}
k{+}1,\;b_1,\;2,\dots,k,\;o_1,\;b_2,\;2,\dots,k,\;o_2,\;\dots,
\end{align*}
where each bit
$b_m\in\{0,1\}$ is chosen by the environment and each output
$o_m\in\{k+1,k+2\}$ is chosen by the controller via the action at cell $k$.
Let the \emph{bad} windows be
\begin{align*}
B\coloneqq\{(b,2,3,\dots,k,x)\in\aS^{k+1} \mid b\in\{0,1\},\ x\neq k+1+b\},
\end{align*}
and let $\varphi\coloneqq\varphi_P$ for $P\coloneqq\aS^{k+1}\setminus B$ be
the order-$k$ differential safety property given by
Lemma~\ref{lem:windows}(1).

\emph{Only the intended windows are constrained.}
By the transition structure, a window whose oldest entry is a bit value
$b\in\{0,1\}$ is necessarily followed by the full corridor $2,\dots,k$ and
then by an output cell; every other window has an oldest entry outside
$\{0,1\}$ and hence lies in $P$. Therefore, along any play, $\varphi$ is
violated at time $t$ iff $s_{t-k}=b\in\{0,1\}$ and $s_t\neq k+1+b$, i.e., iff
some output does not announce the bit chosen $k$ steps earlier.

\emph{A $k$-history dependent winning strategy exists.}
Let $\xi(h)\coloneqq\{b\}$ whenever the last $k$ states of $h$ are
$(b,2,\dots,k)$ with $b\in\{0,1\}$, and $\xi(h)\coloneqq\aA$ otherwise. The
strategy always permits an action, and whenever the controller chooses the
output, the bit lies exactly $k$ steps in the past and is therefore the oldest
entry of the current length-$k$ history, so the chosen output is $k+1+b$.
Hence no bad window ever occurs, and $\xi$ is a $k$-history dependent winning
strategy.

\emph{No $(k-1)$-history dependent winning strategy exists.}
Suppose $\xi$ is a $(k-1)$-history dependent winning strategy. Both finite
plays $\pi_b\coloneqq k{+}1,\,b,\,2,\dots,k$ for $b\in\{0,1\}$ are consistent
with $\xi$: the transition out of $\sinit=k+1$ is resolved by the environment,
which may choose either bit, and the corridor is deterministic. Since $\xi$ is
winning, $\xi(\pi_0)\neq\emptyset$, and since $\pi_0$ and $\pi_1$ share the
last $k-1$ states $(2,\dots,k)$, we have $\xi(\pi_0)=\xi(\pi_1)\eqqcolon P_0$.
Pick any $a\in P_0$ and set $b\coloneqq 1-a$. The extension
$\pi_b\cdot(k+1+a)$ is consistent with $\xi$, and its window at time $k+2$ is
$(b,2,\dots,k,\,k+1+a)\in B$ since $k+1+a\neq k+1+b$. As $\xi$ is winning, it
permits an action at every consistent finite play, so this violating play
extends to an infinite consistent path, contradicting that $\xi$ is winning.
\end{proof}

\subsection{Proofs of Section~\ref{sec:hierarchical-safety}}

\paragraph{Conventions.}
We use the conventions
$\aW_0\coloneqq\{()\}$ and $\xi_0(())\coloneqq\aA$ from
Algorithm~\ref{algo:hierarchical-synthesis}; with these conventions, the claims
of Lemma~\ref{lem:hierarchical-suffix} hold trivially for $i=1$ as well.

We first record a local characterisation of winning windows.

\begin{lemma}[Local characterisation]
\label{lem:local-winning}
For every level $i$ and every $\bar s=(s_1,\dots,s_i)\in\aS^i$,
\begin{align*}
    \bar s\in\aW_i
    \iff
    \sem{\varphi_i}_{i}(\bar s)=\top
    \;\text{ and }\;
    \xi_i(\bar s)\neq\emptyset .
\end{align*}
\end{lemma}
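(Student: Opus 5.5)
We prove both directions directly from the definition of $\aW_i$, using the translation invariance~\eqref{eq:translation-invariance}. That equation says $\sem{\varphi_i}_t(w)=\sem{\varphi_i}_i(w_{t-i+1},\dots,w_t)$, so the obligation at time $t$ depends only on the length-$i$ window ending at $t$. Throughout, a witnessing strategy for a window $\bar s$ is a strategy on finite sequences extending $\bar s$. It always permits an action along consistent extensions and guarantees $\varphi_i$ at every $t\geq i$. The only genuine work is a re-indexing of strategies: \emph{shifting} a witness by deleting the oldest state, and \emph{gluing} successor witnesses behind one action.

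\emph{($\Rightarrow$)} Let $\zeta$ witness $\bar s\in\aW_i$.

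\begin{itemize}
\item Every infinite consistent extension has $\bar s$ as its window at time $t=i$. At least one such extension exists, since $\zeta$ is nonblocking. Hence $\sem{\varphi_i}_i(\bar s)=\top$.
\item Since $\zeta(\bar s)\neq\emptyset$, pick $a\in\zeta(\bar s)$ and fix any $s'\in\dT(s_i,a)$.
\item Define the shifted strategy $\zeta'(s_2,\dots,s_i,s',u)\coloneqq\zeta(s_1,s_2,\dots,s_i,s',u)$ for all continuations $u\in\aS^*$.
\item The map $s_2\cdots s_i s'u\mapsto s_1s_2\cdots s_is'u$ is a bijection. It takes $\zeta'$-consistent extensions of $(s_2,\dots,s_i,s')$ to $\zeta$-consistent extensions of $\bar s$ whose $(i+1)$-st state is $s'$; these are consistent because $a\in\zeta(\bar s)$.
\item Under this bijection, time $t$ in the shifted sequence corresponds to time $t+1$ in the original. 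By~\eqref{eq:translation-invariance}, the window at $t\geq i$ of the shifted sequence is the window at $t+1\geq i+1$ of the original, so $\varphi_i$ holds there.
\item Nonblocking transfers along the same bijection.
\end{itemize}

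So $(s_2,\dots,s_i,s')\in\aW_i$ for every $s'\in\dT(s_i,a)$, which means $a\in\xi_i(\bar s)$. Carried out for every $a\in\zeta(\bar s)$, this also yields $\zeta(\bar s)\subseteq\xi_i(\bar s)$, the fact used later in Step~0 of Theorem~\ref{thrm:algo1}.

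\emph{($\Leftarrow$)} Suppose $\sem{\varphi_i}_i(\bar s)=\top$ and $a\in\xi_i(\bar s)$.

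\begin{itemize}
\item For each $s'\in\dT(s_i,a)$, choose a witness $\zeta^{s'}$ for $(s_2,\dots,s_i,s')\in\aW_i$.
\item Glue them into one strategy: $\zeta(\bar s)\coloneqq\{a\}$ and $\zeta(\bar s,s',u)\coloneqq\zeta^{s'}(s_2,\dots,s_i,s',u)$. Histories not extending $\bar s$ get $\aA$; they are irrelevant.
\item Nonblocking: at $\bar s$, $\zeta$ permits $a$. Any consistent extension continues with some $s'\in\dT(s_i,a)$ and from then on follows $\zeta^{s'}$ along the inverse of the shift bijection, so it inherits nonblocking.
\item Safety at $t=i$: the window is $\bar s$, which satisfies $\varphi_i$ by hypothesis.
\item Safety at $t\geq i+1$: the window equals the window at $t-1\geq i$ of a $\zeta^{s'}$-consistent extension of $(s_2,\dots,s_i,s')$. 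By~\eqref{eq:translation-invariance} it satisfies $\varphi_i$.
\end{itemize}

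Hence $\bar s\in\aW_i$.

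The main obstacle is bookkeeping rather than ideas. One must check that the shift bijection preserves both consistency and the index range $t\geq i$: the time-$i$ obligation of the shifted window is exactly the time-$(i+1)$ obligation of the original. One must also treat $i=1$ uniformly. There the shift deletes the only stored state, and the successor window is just $(s')$. With $\aW_0=\{()\}$ and $\xi_0(())=\aA$ the same argument applies unchanged.
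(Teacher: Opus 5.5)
Your proof is correct and follows the paper's argument essentially verbatim. For ($\Rightarrow$) you shift a witness by deleting the oldest state, for ($\Leftarrow$) you glue the successor witnesses behind the single action $a$, and in both directions the translation invariance~\eqref{eq:translation-invariance} handles the re-indexing. Your closing appeal to the conventions $\aW_0=\{()\}$ and $\xi_0(())=\aA$ is unnecessary, since this lemma never refers to level $i-1$; for $i=1$ the argument goes through as written.
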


\begin{proof}
($\Rightarrow$) Let $\xi$ witness $\bar s\in\aW_i$. Since $\xi$ always permits
an action and every state--action pair has at least one successor, a consistent
infinite extension $w$ of $\bar s$ exists, and $\sem{\varphi_i}_{i}(w)=\top$
gives $\sem{\varphi_i}_{i}(\bar s)=\top$. Pick any $a\in\xi(\bar s)$ and
$s'\in\dT(s_i,a)$. Define a strategy $\xi'$ on extensions of
$(s_2,\dots,s_i,s')$ by
$\xi'((s_2,\dots,s_i,s')\,u)\coloneqq\xi((s_1,\dots,s_i,s')\,u)$ for all
$u\in\aS^*$, and $\xi'(v)\coloneqq\aA$ on all other histories. Then
$w'=s_2\cdots s_i s' s_{i+2}\cdots$ is consistent with $\xi'$ if and only if
$w=s_1 w'$ is consistent with $\xi$, and by
Equation~\eqref{eq:translation-invariance} the level-$i$ window of $w'$ at time
$t'\geq i$ equals the level-$i$ window of $w$ at time $t'+1\geq i+1$, which
satisfies $\varphi_i$. Hence $(s_2,\dots,s_i,s')\in\aW_i$ for every successor
$s'$, i.e., $a\in\xi_i(\bar s)\neq\emptyset$.

($\Leftarrow$) Let $a\in\xi_i(\bar s)$, and for every $s'\in\dT(s_i,a)$ fix a
strategy $\xi_{s'}$ witnessing $(s_2,\dots,s_i,s')\in\aW_i$. Define
$\xi(\bar s)\coloneqq\{a\}$ and
$\xi((s_1,\dots,s_i,s')\,u)\coloneqq\xi_{s'}((s_2,\dots,s_i,s')\,u)$ for all
$s'\in\dT(s_i,a)$ and $u\in\aS^*$, and $\xi(v)\coloneqq\aA$ elsewhere. Then
$\xi$ always permits an action. Along every consistent extension $w$ of
$\bar s$ we have $\sem{\varphi_i}_{i}(w)=\sem{\varphi_i}_{i}(\bar s)=\top$ by
assumption, and for $t\geq i+1$ the level-$i$ window of $w$ at $t$ equals the
level-$i$ window at $t-1\geq i$ of the path $s_2\cdots s_i s' s_{i+2}\cdots$,
which is consistent with $\xi_{s'}$ and hence satisfies $\varphi_i$. Therefore
$\bar s\in\aW_i$.
\end{proof}

\hierarchicalSuffix*

\begin{proof}
Let $(s_1,\dots,s_i)\in\aW_i$, witnessed by a strategy $\xi$. Define a strategy
$\xi'$ on extensions of $(s_2,\dots,s_i)$ by
$\xi'((s_2,\dots,s_i)\,u)\coloneqq\xi((s_1,\dots,s_i)\,u)$ for all $u\in\aS^*$,
and $\xi'(v)\coloneqq\aA$ on all other histories. Then $\xi'$ always permits an
action, and $w'=s_2\cdots s_i s_{i+1}\cdots$ is a consistent extension of
$(s_2,\dots,s_i)$ under $\xi'$ if and only if $w=s_1 w'$ is a consistent
extension of $(s_1,\dots,s_i)$ under $\xi$.

Fix such a $w'$ and a time $t'\geq i-1$, and set $t\coloneqq t'+1\geq i$. Since
$\xi$ is a witness,
$\sem{\varphi_i}_t(w)=\sem{\varphi_i}_{i}(w_{t-i+1},\dots,w_t)=\top$ by
Equation~\eqref{eq:translation-invariance}. The hierarchy condition
\eqref{eq:hierarchy} then yields
$\sem{\varphi_{i-1}}_{i-1}(w_{t-i+2},\dots,w_t)=\top$. Since $w_\tau=w'_{\tau-1}$
for $\tau\geq 2$, we have
$(w_{t-i+2},\dots,w_t)=(w'_{t'-i+2},\dots,w'_{t'})$, which is precisely the
level-$(i-1)$ window of $w'$ at time $t'$. Hence
$\sem{\varphi_{i-1}}_{t'}(w')=\top$ for all $t'\geq i-1$, so $\xi'$ witnesses
$(s_2,\dots,s_i)\in\aW_{i-1}$.

For the second claim, let $a\in\xi_i(s_1,\dots,s_i)$, i.e.,
$(s_2,\dots,s_i,s')\in\aW_i$ for every $s'\in\dT(s_i,a)$. By the first claim,
$(s_3,\dots,s_i,s')\in\aW_{i-1}$ for every such $s'$, and therefore
$a\in\xi_{i-1}(s_2,\dots,s_i)$ by definition.
\end{proof}

Next, we show that the transient pass of
Algorithm~\ref{algo:hierarchical-synthesis} is exact. Recall that
$\hat\xi_k(\bar s)=\xi_k(\bar s)$ for $\bar s\in\aW_k$ and
$\hat\xi_k(\bar s)=\emptyset$ otherwise.

\begin{lemma}[Transient correctness]
\label{lem:transient}
Let $\varphi_1,\dots,\varphi_k$ be a hierarchical differential safety
condition. For every $j\in\{1,\dots,k-1\}$ and every
$(s_1,\dots,s_j)\in\aS^j$: the controller has a strategy that always permits
at least one action and guarantees $\sem{\varphi_{\min\{t,k\}}}_t(w)=\top$ for
all $t\geq j$ along every consistent infinite extension
$w=s_1\cdots s_j s_{j+1}\cdots$ if and only if
$\hat\xi_j(s_1,\dots,s_j)\neq\emptyset$. Moreover,
$\hat\xi_j(s_1,\dots,s_j)$ consists precisely of the actions permitted at
$(s_1,\dots,s_j)$ by some such strategy.
\end{lemma}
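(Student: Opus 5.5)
Define, for $j\in\{1,\dots,k\}$ and $h=(s_1,\dots,s_j)$, the set $\hat W_j$ of histories from which the controller can enforce the transient obligation: there is a strategy that always permits an action and guarantees $\sem{\varphi_{\min\{t,k\}}}_t(w)=\top$ for all $t\geq j$ along every consistent extension. Let $\hat\xi^*_j(h)$ be the set of actions permitted at $h$ by some such strategy. The proof is a downward induction on $j$ from $k$ to $1$, with hypothesis $\hat W_{j+1}=\{h' \mid \hat\xi_{j+1}(h')\neq\emptyset\}$ and $\hat\xi_{j+1}=\hat\xi^*_{j+1}$ on $\hat W_{j+1}$.

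\emph{Base case $j=k$.} For $t\geq k$ the obligation is $\varphi_k$ at every index $t\geq k$. This is exactly the definition of $\aW_k$, after using Equation~\eqref{eq:translation-invariance} to identify windows of $w$ with windows starting at index $1$. By Theorem~\ref{thm:hierarchical-iterative}, Algorithm~\ref{algo:hierarchical-synthesis} computes $\aW_k$ and $\xi_k$ on it, and $\hat\xi_k$ is set to $\xi_k$ on $\aW_k$ and to $\emptyset$ elsewhere. By Lemma~\ref{lem:local-winning}, $\hat\xi_k(h)\neq\emptyset$ iff $h\in\aW_k$. It remains to show that $\xi_k(h)$ equals the set of actions used by some witness. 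The argument is the same as the ``only if/if'' gluing in Lemma~\ref{lem:local-winning}: an action $a$ is used by a witness iff every $a$-successor window lies in $\aW_k$. There is one subtlety. Here the witnesses live on the full history $h$ rather than on shifted windows, but by translation invariance they correspond bijectively to window witnesses. Theorem~\ref{thm:hierarchical-iterative} is being invoked only for its claims about $\aW_k$ and $\xi_k$, so this is not circular.

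\emph{Inductive step $j<k$.} For $h=(s_1,\dots,s_j)$, split the obligation as $\sem{\varphi_j}_j(h)=\top$ at time $t=j$, which depends only on $h$, together with the obligations for $t\geq j+1$ along the extension. Those later obligations are exactly the ones defining $\hat W_{j+1}$ from $(h,s')$.

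($\Rightarrow$) Take a witness $\zeta$ for $h$ and an action $a\in\zeta(h)$. A consistent infinite extension exists because $\zeta$ is nonblocking, so $\sem{\varphi_j}_j(h)=\top$. For each $s'\in\dT(s_j,a)$, restrict $\zeta$ to extensions of $(h,s')$. This restriction witnesses $(h,s')\in\hat W_{j+1}$, so $\hat\xi_{j+1}(h,s')\neq\emptyset$ by the induction hypothesis, and hence $a\in\hat\xi_j(h)$. This gives $\hat\xi^*_j(h)\subseteq\hat\xi_j(h)$.

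($\Leftarrow$) Take $a\in\hat\xi_j(h)$. By the induction hypothesis, each successor $(h,s')$ has a witness. Glue these behind $a$: set $\zeta(h)=\{a\}$, follow the witness for $s'$ on extensions of $(h,s')$, and permit all of $\aA$ elsewhere. Then $\zeta$ is nonblocking, it meets the time-$j$ obligation because $\sem{\varphi_j}_j(h)=\top$, and it meets every later obligation. So $h\in\hat W_j$ and $a\in\hat\xi^*_j(h)$. Both the equivalence and the characterisation of permitted actions follow.

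\emph{Main obstacle.} The only delicate point is the definition of histories here. They are full prefixes $(s_1,\dots,s_j)$ starting at index $1$, not sliding windows, so the time-$t$ indices line up directly. Only at $j=k$ do we pass to windows, and we must check that $\sem{\varphi_{\min\{t,k\}}}_t$ for $t\geq k$ coincides with the level-$k$ window semantics. Translation invariance gives exactly this. No use of the hierarchy condition \eqref{eq:hierarchy} is needed for this lemma.
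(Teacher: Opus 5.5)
Your proposal is correct and follows essentially the same route as the paper. The base case $j=k$ identifies transient-winning histories with $\aW_k$ and uses Lemma~\ref{lem:local-winning} together with the algorithm's exactness at level $k$. The downward induction then splits off the time-$j$ obligation, restricts a witness to get ($\Rightarrow$), and glues successor witnesses behind $a$ to get ($\Leftarrow$). Your appeal to translation invariance at $j=k$ is unnecessary: $\aW_k$ is already defined over extensions of the full prefix $(s_1,\dots,s_k)$, so the two notions coincide literally.
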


\begin{proof}
We say that a history $h$ of length $j\leq k$ is \emph{transient-winning} if
the controller can guarantee, in the above sense,
$\sem{\varphi_{\min\{t,k\}}}_t(w)=\top$ for all $t\geq j$ along every
consistent infinite extension of $h$. For $j=k$ the obligations are
$\sem{\varphi_k}_t(w)=\top$ for all $t\geq k$, so, by definition of winning
windows, a history of length $k$ is transient-winning iff it lies in $\aW_k$,
i.e., iff $\hat\xi_k(h)\neq\emptyset$; and the two directions of the proof of
Lemma~\ref{lem:local-winning} show that the permitted actions of
witnessing strategies at $h\in\aW_k$ are exactly $\xi_k(h)=\hat\xi_k(h)$.

We proceed by downward induction on $j=k-1,\dots,1$. Fix
$h=(s_1,\dots,s_j)$. The obligations split into the obligation at time $j$,
namely $\sem{\varphi_j}_j(h)=\top$, which is a property of $h$ itself, and the
obligations at times $t\geq j+1$, which are exactly the obligations of the
extension viewed from the successor history $(h,s')$ of length $j+1$.

($\Leftarrow$) Let $a\in\hat\xi_j(h)$. Then $\sem{\varphi_j}_j(h)=\top$, and
for every $s'\in\dT(s_j,a)$ we have $\hat\xi_{j+1}(h,s')\neq\emptyset$, so by
the induction hypothesis (or the base case, if $j+1=k$) there is a strategy
$\xi_{s'}$ witnessing that $(h,s')$ is transient-winning. Define
$\xi(h)\coloneqq\{a\}$, $\xi((h,s')\,u)\coloneqq\xi_{s'}((h,s')\,u)$ for all
$s'\in\dT(s_j,a)$ and $u\in\aS^*$, and $\xi(v)\coloneqq\aA$ elsewhere. Then
$\xi$ always permits an action, and along every consistent extension the
obligation at time $j$ holds by assumption and the obligations at times
$t\geq j+1$ hold because the extension is consistent with some $\xi_{s'}$ from
$(h,s')$ onwards. Hence $h$ is transient-winning, with $a$ permitted.

($\Rightarrow$) Let $\xi$ witness that $h$ is transient-winning, and let
$a\in\xi(h)$. Since $\xi$ always permits an action and every state--action
pair has a successor, a consistent infinite extension of $h$ exists, so
$\sem{\varphi_j}_j(h)=\top$. For every $s'\in\dT(s_j,a)$, the restriction of
$\xi$ to extensions of $(h,s')$ (completed by $\aA$ elsewhere) witnesses that
$(h,s')$ is transient-winning, since the consistent extensions of $(h,s')$
under the restriction are exactly the consistent extensions of $h$ under $\xi$
that start with $s'$, and their obligations at times $t\geq j+1$ are among the
obligations guaranteed by $\xi$. By the induction hypothesis,
$\hat\xi_{j+1}(h,s')\neq\emptyset$ for every $s'$, hence $a\in\hat\xi_j(h)$ by
the definition on Line~\ref{alg2:transient-end}. This also shows that every
action permitted by some witnessing strategy lies in $\hat\xi_j(h)$, which
together with ($\Leftarrow$) establishes the ``moreover'' part.
\end{proof}

\hierarchicalIterative*

\begin{proof}
Throughout the proof, we write $\xi_i^{\mathrm{alg}}$ for the action sets
maintained by the algorithm and
$\aW_i^{\mathrm{alg}}\coloneqq\{\bar s\in\aC_i \mid
\xi_i^{\mathrm{alg}}(\bar s)\neq\emptyset\}$ for the set computed on
Line~\ref{alg2:winning}, while $\xi_i$ and $\aW_i$ denote the sets defined in
Section~\ref{sec:hierarchical-safety}.
We prove by induction on $i$ that, after iteration $i$,
$\aW_i^{\mathrm{alg}}=\aW_i$ and
$\xi_i^{\mathrm{alg}}(\bar s)=\xi_i(\bar s)$ for every $\bar s\in\aW_i$. The
base case $i=0$ holds by the conventions of the initialisation.

Fix $i\geq 1$ and assume the claim for level $i-1$, so that the sets used on
Lines~\ref{alg2:candidates} and~\ref{alg2:inherited} are exact. Call a set
$X\subseteq\aC_i$ \emph{invariant} if every $\bar s=(s_1,\dots,s_i)\in X$
admits an action $a\in\xi_{i-1}(s_2,\dots,s_i)$ with
$(s_2,\dots,s_i,s')\in X$ for every $s'\in\dT(s_i,a)$.

\emph{Step 1: $\aW_i$ is invariant.}
Let $\bar s\in\aW_i$. By Lemma~\ref{lem:local-winning},
$\sem{\varphi_i}_{i}(\bar s)=\top$ and there exists $a\in\xi_i(\bar s)$; by
Lemma~\ref{lem:hierarchical-suffix}, $(s_2,\dots,s_i)\in\aW_{i-1}$, so
$\bar s\in\aC_i$, and moreover $a\in\xi_{i-1}(s_2,\dots,s_i)$. By definition of
$\xi_i$, every successor window $(s_2,\dots,s_i,s')$ lies in $\aW_i$.

\emph{Step 2: every invariant set $X$ satisfies $X\subseteq\aW_i$.}
Given $\bar s_0\in X$, consider the strategy that, at every history whose
current length-$i$ window lies in $X$, permits exactly the actions
$a\in\xi_{i-1}$ of the suffix for which all successor windows remain in $X$;
invariance guarantees that at least one such action exists, and the strategy
permits all actions elsewhere. Along every consistent extension of $\bar s_0$,
a straightforward induction shows that the window at every time $t\geq i$ lies
in $X\subseteq\aC_i$ and hence satisfies $\varphi_i$ by
Equation~\eqref{eq:translation-invariance}. Thus $\bar s_0\in\aW_i$.

\emph{Step 3: the algorithm computes the largest invariant set.}
First, let $X$ be invariant. An induction on the sequence of pops and removals
shows that no window of $X$ is ever inserted into $\aU$ and that, for every
$\bar s\in X$, no action $a$ witnessing the invariance of $\bar s$ is ever
removed from $\xi_i^{\mathrm{alg}}(\bar s)$: such an $a$ passes the
initialisation test on Line~\ref{alg2:inherited} because all its successor
windows lie in $X\subseteq\aC_i$, and it is removed on
Line~\ref{alg2:remove} only if one of these successor windows is popped, which
never happens. In particular, every window of $X$ retains a nonempty action
set.

Conversely, consider the surviving windows $\aW_i^{\mathrm{alg}}$. A window
whose action set becomes empty is inserted
into $\aU$ (Line~\ref{alg2:empty} or the propagation loop) and eventually
popped, and its action set remains empty afterwards; hence surviving windows
are never popped. For $\bar s\in \aW_i^{\mathrm{alg}}$ and
$a\in\xi_i^{\mathrm{alg}}(\bar s)$,
the action $a$ passed initialisation, so $a\in\xi_{i-1}(s_2,\dots,s_i)$ and all
successor windows lie in $\aC_i$; and $a$ was never removed, so no successor
window was ever popped, i.e., all successor windows lie in
$\aW_i^{\mathrm{alg}}$. Hence $\aW_i^{\mathrm{alg}}$ is invariant.

Combining the three steps: $\aW_i$ is invariant (Step~1), so
$\aW_i\subseteq \aW_i^{\mathrm{alg}}$ (first part of Step~3); and
$\aW_i^{\mathrm{alg}}$ is invariant (second
part of Step~3), so $\aW_i^{\mathrm{alg}}\subseteq\aW_i$ (Step~2). Therefore
$\aW_i^{\mathrm{alg}}=\aW_i$. For the action sets, let $\bar s\in\aW_i$. The
argument above shows
\begin{align*}
    \xi_i^{\mathrm{alg}}(\bar s)
    &=
    \{a\in\xi_{i-1}(s_2,\dots,s_i)
    \mid
    \forall s'\in\dT(s_i,a)\colon (s_2,\dots,s_i,s')\in\aW_i^{\mathrm{alg}}\}
    \\
    &=
    \xi_{i-1}(s_2,\dots,s_i)\cap\xi_i(\bar s)
    =
    \xi_i(\bar s),
\end{align*}
where the last equality uses Lemma~\ref{lem:hierarchical-suffix}.

\emph{Maximal permissiveness.}
Restricted to $\aW_k$, the action sets $\xi_k$ form a winning strategy by
Step~2 applied to $X=\aW_k$. Moreover, if $\zeta$ is any winning strategy and
$a$ is permitted by $\zeta$ at a history with current window $\bar s\in\aW_k$,
then, as in the proof of Lemma~\ref{lem:local-winning}, the shifted strategy
witnesses that every successor window under $a$ is winning, so
$a\in\xi_k(\bar s)$. Hence, on every history whose current window lies in
$\aW_k$, $\xi_k$ subsumes every winning strategy and coincides with the
maximally permissive winning strategy of $\gG_{\varphi_k}$
obtained by direct synthesis.

\emph{Hierarchical winning condition.}
Consider the strategy $\sigma$ from the statement and suppose the turn-based
game with winning condition $\aF_{\varphi_1,\dots,\varphi_k}$ admits a winning
strategy. Restricting this strategy to extensions of its consistent length-$1$
play shows that $(\sinit)$ is transient-winning in the sense of
Lemma~\ref{lem:transient}, so $\hat\xi_1((\sinit))\neq\emptyset$. Along every
$\sigma$-consistent play $w$, a straightforward induction shows
$\hat\xi_t(w_{1:t})\neq\emptyset$ for all $t<k$ and
$w_{t-k+1:t}\in\aW_k$ for all $t\geq k$: each
$a\in\hat\xi_t(w_{1:t})$ forces all successor histories to retain a nonempty
set by Line~\ref{alg2:transient-end} (for $t<k-1$) or to lie in $\aW_k$ (for
$t=k-1$), and each $a\in\xi_k$ keeps the window in $\aW_k$. In particular,
$\sigma$ always permits an action, $\sem{\varphi_t}_t(w)=\top$ for all $t<k$
by the test on Line~\ref{alg2:transient-end}, and
$\sem{\varphi_k}_t(w)=\top$ for all $t\geq k$ by
Lemma~\ref{lem:local-winning} and
Equation~\eqref{eq:translation-invariance}; hence $\sigma$ is winning for
$\aF_{\varphi_1,\dots,\varphi_k}$. Conversely, let $\zeta$ be any winning
strategy for $\aF_{\varphi_1,\dots,\varphi_k}$ and let $a\in\zeta(w_{1:t})$
for a $\zeta$-consistent play prefix. For $t<k$, the restriction of $\zeta$ to
extensions of $w_{1:t}$ witnesses that $w_{1:t}$ is transient-winning with $a$
permitted, so $a\in\hat\xi_t(w_{1:t})=\sigma(w_{1:t})$ by the ``moreover''
part of Lemma~\ref{lem:transient}. For $t\geq k$, the remaining obligations
are $\sem{\varphi_k}_{t'}(w)=\top$ for $t'>t$ (together with the already
determined past), so, as in the maximal-permissiveness argument above,
$a\in\xi_k(w_{t-k+1:t})=\sigma(w_{1:t})$. Hence $\sigma$ subsumes $\zeta$ and
is the maximally permissive winning strategy for
$\aF_{\varphi_1,\dots,\varphi_k}$.

\emph{Complexity.}
At level $i$, the candidates are exactly the windows $(s_1,\bar w)$ with
$\bar w\in\aW_{i-1}$ and $s_1\in\aS$ that satisfy $\varphi_i$, so $\aC_i$ can
be enumerated in time $\mathcal O(|\aW_{i-1}|\cdot|\aS|)$ and
$|\aC_i|\leq|\aW_{i-1}|\cdot|\aS|$. The initialisation inspects every candidate,
inherited action, and successor, in time
$\mathcal O(|\aC_i|\cdot|\aA|\cdot|\aS|)$. During propagation, each pair of a
window and an action is removed at most once, each window is popped at most
once, and each pop inspects at most $|\aA|\cdot|\aS|$ predecessor pairs, giving
$\mathcal O(|\aC_i|\cdot|\aA|\cdot|\aS|)$ in total. Summing over the levels
yields $\mathcal O\bigl(\sum_{i=1}^{k}|\aC_i|\cdot|\aA|\cdot|\aS|\bigr)$, which
is bounded by $\mathcal O(|\aS|^{k+1}\cdot|\aA|)$ since $|\aC_i|\leq|\aS|^i$.
The transient pass inspects every history of length $j<k$, action, and
successor, in time $\mathcal O\bigl(\sum_{j<k}|\aS|^{j+1}\cdot|\aA|\bigr)$,
which is dominated by the bound above.
\end{proof}

\section{Further Experimental Evaluation}

\subsection{Experimental two-dimensional car model}
\label{app:2d-car-model}

The experimental evaluation uses a finite-state abstraction of a planar vehicle.
The model is intentionally simple: it keeps the transition relation small enough
for explicit-state shield synthesis, while preserving the two features that are
central for differential safety: the system has momentum-like history through
finite differences, and the controller must be robust against nondeterministic
successors.  It can be read as a grid abstraction of a car or mobile robot moving
through a rectangular workspace toward a wall, possibly with rectangular
obstacles in the interior.

\paragraph{States and actions.}
For parameters \(X,Y\in\mathbb{N}\), the physical state is the grid position
\[
  q=(x,y)\in \aS=\{0,\ldots,X\}\times\{0,\ldots,Y\}.
\]
The initial position used in the simulations is configurable; synthesis starts
from the all-zero history unless otherwise stated.  At every step the controller
selects a commanded displacement
\[
  u=(u_x,u_y)\in
  \aA=\{0,\ldots,V_x\}\times \{0,\ldots,V_y\}.
\]
Commands are nonnegative in both coordinates.  In particular, the vehicle is
not allowed to deliberately move backwards; the \(x\)-coordinate models progress
towards the wall, while the \(y\)-coordinate allows lateral motion in the grid.
The synthesis output is a permissive shield: for every winning history it stores
the subset of commands that are safe against all disturbances.

\paragraph{Nondeterministic transition relation.}
After the command \(u\) is chosen, the environment chooses an integer
disturbance
\[
  \delta=(\delta_x,\delta_y)\in D(u)
  =
  \{-\eta_x(u),\ldots,\eta_x(u)\}
  \times
  \{-\eta_y(u),\ldots,\eta_y(u)\},
\]
where
\[
  \eta_x(u)=\min\{\eta_{\max},u_x\},
  \qquad
  \eta_y(u)=\min\{\eta_{\max},u_y\}.
\]
The successor position is
\[
  q' = q + u + \delta .
\]
The dependence of \(D(u)\) on \(u\) ensures that a zero command has no negative
overshoot in that coordinate, and more generally that the realized displacement
in each coordinate remains nonnegative.  The nondeterminism in the game is
precisely this disturbance choice.  During synthesis, a command is retained only
if every \(q'\) induced by every \(\delta\in D(u)\) is safe and remains in the
winning region.  During simulation, one of the allowed commands is selected by a
concrete policy and the disturbance is sampled from the corresponding finite
disturbance set.

\paragraph{Position safety.}
The zeroth-order safety condition is a position predicate.  A position
\(q=(x,y)\) is position-safe if
\[
  0\le x\le X,\qquad 0\le y\le Y,\qquad x\le x_{\mathsf{wall}},
\]
and \(q\) is not contained in any obstacle rectangle.  If
\(\mathcal{O}\) is the set of configured rectangles
\[
  O=[x_{\min},x_{\max}]\times[y_{\min},y_{\max}]\cap\mathbb{Z}^2,
\]
then the full position-safe set is
\[
  S_0 =
  \{(x,y)\in \aS \mid x\le x_{\mathsf{wall}}\}
  \setminus \bigcup_{O\in\mathcal{O}} O .
\]
The wall is always active; obstacles are optional.  This is the direct collision
source of unsafety.

\paragraph{Higher-order safety.}
Let \(h=(q_t,q_{t-1},\ldots,q_{t-k})\) be the stored history, ordered from most
recent to oldest.  Since the sampling period is normalized to one, the discrete
order-\(d\) derivative at a prospective successor \(q_{t+1}\) is the backward
finite difference
\[
  \partial^d q_{t+1}
  =
  q_{t+1}
  +
  \sum_{i=1}^{d}(-1)^i \binom{d}{i} q_{t+1-i},
  \qquad d=1,\ldots,k .
\]
The experimental specifications use Euclidean balls as derivative-safe sets:
\[
  \|\partial^d q_{t+1}\|_2 \le r_d ,
  \qquad d=1,\ldots,k .
\]
The radii \(r_1,\ldots,r_k\) therefore bound speed, acceleration, jerk, and
higher finite differences.  A transition is locally safe only if the successor is
position-safe and all derivative bounds up to the current synthesis order hold.
Consequently, a history may be losing even when its current position is far from
a wall or obstacle: the vehicle may be moving too quickly, accelerating too
abruptly, or have too large a higher-order finite difference to guarantee future
collision avoidance under all disturbances.

\paragraph{Agent's policy for simulation.}
At each state, the synthesized shield defines a mask of admissible commands.
The agent's policy never selects commands outside this mask.
Among the admissible commands, it first restricts attention to those with
maximal $x$-velocity. It then samples a command $(v_x,v_y)$ according to
\[
  \Pr(v_x,v_y)
  \propto
  \exp\!\left(
    -\frac{\left|\sqrt{v_x^2+v_y^2}-v^\star\right|}{\temp}
  \right),
\]
where $v^\star$ is the target speed and $\temp>0$ is the temperature. Smaller
$\temp$ makes the policy concentrate on commands whose speed is closest to
$v^\star$, while larger $\temp$ makes the choice more diffuse among the
maximal-$v_x$ admissible commands.

To demonstrate a wide range of usage, we use different policies in the \texttt{Wall} and \texttt{Obstacle} instances. For the \texttt{Wall} instance, we use a relatively slow and conservative policy, with target velocity $v^\star = 5$ and temperature $\temp=0.1$. 
For the \texttt{Obstacle} instance, we use a faster and more stochastic policy, with a target velocity of $v^\star=8$ and temperature $\temp=1.15$.

\subsection{Synthesis cost}

In Tables~\ref{tab:synt-time-app} and \ref{tab:synth-mem-app}, we show further results comparing synthesis costs of the baseline, direct, and iterative algorithms.

\begin{table}[t]
\centering
\setlength{\tabcolsep}{1.2\tabcolsep}
\caption{Synthesis Cost: Time}
\label{tab:synt-time-app}
\begin{tabular}{lccc|rrr}
\toprule
\multirow{2}{*}{Benchmark}  & \multicolumn{3}{c}{Synth. Time (s)} & \multicolumn{3}{c}{Pruned States (\%)}\\
\cline{2-7}
  & Baseline & Direct & Iterative & $d = 3$ & $d = 4$ & $d = 5$\\
\midrule
$E^{(40, 45), (4, 4), 1}_{[4, 5, 8]}$ & 25.1 & \textbf{1.46} & 1.51 & 0.0\% & - & - \\
$E^{(45, 40), (4, 4), 1}_{[4, 5, 7]}$ & 23.4 & 1.6 & \textbf{1.54} & 0.0\% & - & - \\
$E^{(60, 65), (8, 4), 1}_{[4, 6, 7]}$ & TO & \textbf{4.15} & 4.2 & 0.0\% & - & - \\
$E^{(65, 60), (8, 4), 1}_{[4, 5, 7]}$ & TO & \textbf{3.95} & 4.06 & 0.0\% & - & - \\
$E^{(470, 110), (12, 6), 1}_{[5, 2, 8, 11]}$ & TO & 42 & \textbf{16.9} & 89.5\% & 89.6\% & - \\
$E^{(450, 120), (12, 6), 1}_{[5, 2, 8, 10]}$ & TO & 45.9 & \textbf{18.2} & 89.5\% & 89.7\% & - \\
$E^{(520, 120), (12, 6), 1}_{[5, 2, 9, 10]}$ & TO & 52.8 & \textbf{21} & 89.5\% & 89.7\% & - \\
$E^{(540, 120), (14, 6), 1}_{[6, 3, 9, 11]}$ & TO & TO & TO & 0.0\% & 0.0\% & - \\
$E^{(20, 22), (4, 2), 1}_{[3, 3, 6, 10, 18]}$ & 52.3 & \textbf{5.7} & 7.33 & 0.0\% & 0.0\% & 57.8\% \\
$E^{(22, 20), (4, 2), 1}_{[4, 4, 6, 7, 18]}$ & TO & 14.4 & \textbf{8.05} & 0.0\% & 57.8\% & 88.8\% \\
$E^{(22, 20), (4, 2), 1}_{[3, 3, 6, 10, 20]}$ & 54.4 & \textbf{5.97} & 7.31 & 0.0\% & 0.0\% & 57.8\% \\
$E^{(32, 20), (4, 2), 1}_{[3, 3, 4, 5, 20]}$ & 14.9 & 2.81 & \textbf{0.469} & 0.0\% & 73.3\% & 92.9\% \\
$E^{(70, 20), (6, 4), 1}_{[5, 2, 8, 10, 12]}$ & 9.2 & 2.23 & \textbf{0.871} & 88.5\% & 88.7\% & 88.2\% \\
$E^{(90, 20), (8, 2), 1}_{[5, 2, 8, 10, 12]}$ & 18.5 & 2.27 & \textbf{1.05} & 85.1\% & 85.3\% & 84.8\% \\
$E^{(110, 20), (6, 2), 1}_{[5, 2, 8, 10, 12]}$ & 23.2 & 2.8 & \textbf{1.21} & 85.2\% & 85.4\% & 84.8\% \\
$E^{(130, 20), (6, 2), 1}_{[5, 2, 8, 10, 12]}$ & 28.3 & 3.35 & \textbf{1.35} & 85.2\% & 85.4\% & 84.9\% \\
\bottomrule
\end{tabular}
\end{table}
\begin{table}[t]
\centering
\setlength{\tabcolsep}{1.2\tabcolsep}
\caption{Synthesis Cost: Memory}
\label{tab:synth-mem-app}
\begin{tabular}{lccc|rrr}
\toprule
\multirow{2}{*}{Benchmark}  & \multicolumn{3}{c}{Synth. Memory (MB)} & \multicolumn{3}{c}{Pruned States (\%)}\\
\cline{2-7}
  & Baseline & Direct & Iterative & $d = 3$ & $d = 4$ & $d = 5$\\
\midrule
$E^{(40, 45), (4, 4), 1}_{[4, 5, 8]}$ & 3175.2 & \textbf{269.3} & 270.7 & 0.0\% & - & - \\
$E^{(45, 40), (4, 4), 1}_{[4, 5, 7]}$ & 2639.1 & 258.3 & \textbf{257.2} & 0.0\% & - & - \\
$E^{(60, 65), (8, 4), 1}_{[4, 6, 7]}$ & TO & \textbf{599.9} & 613.3 & 0.0\% & - & - \\
$E^{(65, 60), (8, 4), 1}_{[4, 5, 7]}$ & TO & \textbf{606.7} & 610.3 & 0.0\% & - & - \\
$E^{(470, 110), (12, 6), 1}_{[5, 2, 8, 11]}$ & TO & 2284.8 & \textbf{891.2} & 89.5\% & 89.6\% & - \\
$E^{(450, 120), (12, 6), 1}_{[5, 2, 8, 10]}$ & TO & 2201.9 & \textbf{932.1} & 89.5\% & 89.7\% & - \\
$E^{(520, 120), (12, 6), 1}_{[5, 2, 9, 10]}$ & TO & 2676.9 & \textbf{1032.3} & 89.5\% & 89.7\% & - \\
$E^{(540, 120), (14, 6), 1}_{[6, 3, 9, 11]}$ & TO & TO & TO & 0.0\% & 0.0\% & - \\
$E^{(20, 22), (4, 2), 1}_{[3, 3, 6, 10, 18]}$ & 3730.6 & 696.1 & \textbf{679.9} & 0.0\% & 0.0\% & 57.8\% \\
$E^{(22, 20), (4, 2), 1}_{[4, 4, 6, 7, 18]}$ & TO & 1540.3 & \textbf{529.4} & 0.0\% & 57.8\% & 88.8\% \\
$E^{(22, 20), (4, 2), 1}_{[3, 3, 6, 10, 20]}$ & 4384.6 & 741.1 & \textbf{706.1} & 0.0\% & 0.0\% & 57.8\% \\
$E^{(32, 20), (4, 2), 1}_{[3, 3, 4, 5, 20]}$ & 1502.0 & 297.2 & \textbf{41.6} & 0.0\% & 73.3\% & 92.9\% \\
$E^{(70, 20), (6, 4), 1}_{[5, 2, 8, 10, 12]}$ & 645.5 & 176.4 & \textbf{59.2} & 88.5\% & 88.7\% & 88.2\% \\
$E^{(90, 20), (8, 2), 1}_{[5, 2, 8, 10, 12]}$ & 1273.6 & 203.8 & \textbf{78.2} & 85.1\% & 85.3\% & 84.8\% \\
$E^{(110, 20), (6, 2), 1}_{[5, 2, 8, 10, 12]}$ & 1642.2 & 259.3 & \textbf{97.5} & 85.2\% & 85.4\% & 84.8\% \\
$E^{(130, 20), (6, 2), 1}_{[5, 2, 8, 10, 12]}$ & 1872.2 & 296.9 & \textbf{115.9} & 85.2\% & 85.4\% & 84.9\% \\
\bottomrule
\end{tabular}
\end{table}

\subsection{Trace simulation results}
We show further results on the simulation of the \texttt{Wall} and \texttt{Obstacle} instances.

In Table~\ref{tab:wall-obstacle-order-sweep-violations}, we show the average number of safety violations for each shield and each order.
As expected, when the order of the shield is higher than or equal to the order of the safety property, there are no safety violations, and the number of violations for a given shielded agent increases with the number of orders considered in the hierarchical property, as the property becomes more restrictive.

\begin{table}
\centering
\setlength{\tabcolsep}{1.2\tabcolsep}
\caption{Average number of derivative-safety violations per trace.}
\label{tab:wall-obstacle-order-sweep-violations}
\begin{tabular}{lrrrr|rrrr}
\toprule
\multirow{2}{*}{Shield order} & \multicolumn{4}{c}{Wall} & \multicolumn{4}{c}{Obstacle} \\
\cline{2-5} \cline{6-9}
  & $d=1$ & $d=2$ & $d=3$ & $d=4$ & $d=1$ & $d=2$ & $d=3$ & $d=4$ \\
\midrule
$k=1$ & 0 & 2.50 & 3.55 & 5.45 & 0 & 5.15 & 9.40 & 12.30 \\
$k=2$ & 0 & 0 & 2.35 & 8.25 & 0 & 0 & 6.55 & 11.55 \\
$k=3$ & 0 & 0 & 0 & 0.75 & 0 & 0 & 0 & 4.80 \\
$k=4$ & 0 & 0 & 0 & 0 & 0 & 0 & 0 & 0 \\
\bottomrule
\end{tabular}
\end{table}

In Fig.~\ref{fig:pressure-app}, we show the same pressure graph as Fig.~\ref{fig:pressure-main}, this time for the \texttt{Obstacle} instance, which shows the same phenomenon: 

\begin{figure}[t]
    \centering
    \includegraphics[width=0.85\linewidth]{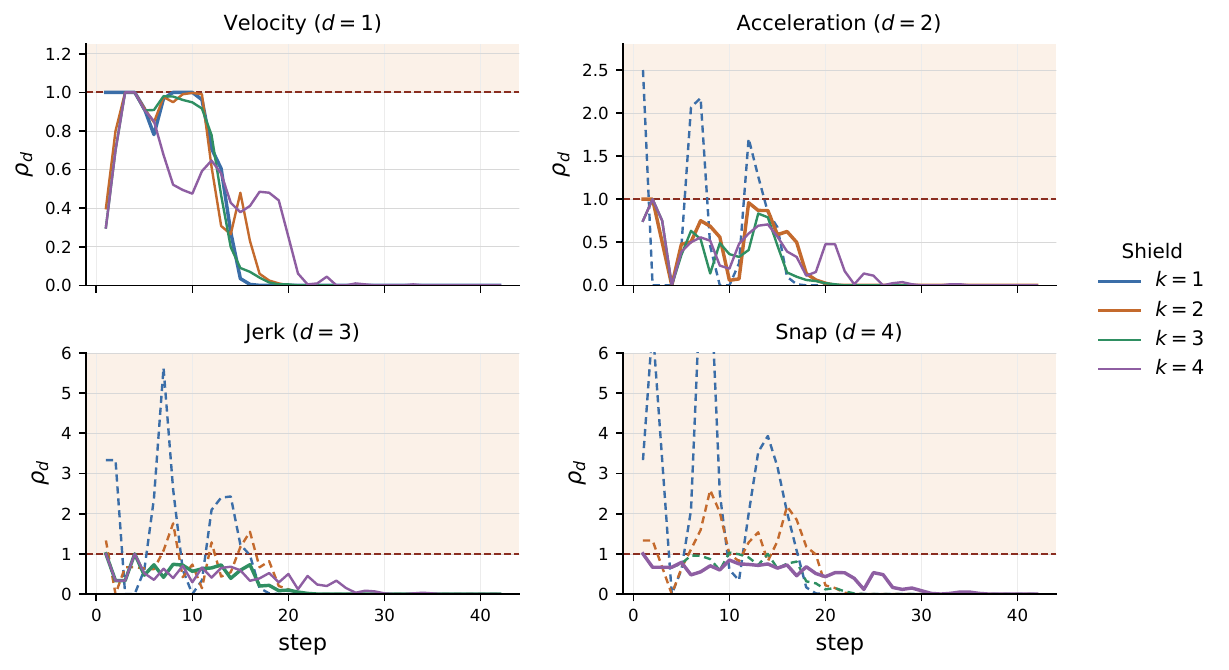}
    \caption{Derivative pressure by shield and constraint order for the \texttt{Obstacle} instance.}
    \label{fig:pressure-app}
\end{figure}

\paragraph{Derivative pressure heatmaps.}
In Figs.~\ref{fig:heatmap-app-obs} and \ref{fig:heatmap-app-wall}, we show heatmaps for the derivative pressure for different shield and constraint orders.
For each shield order $k$ and each constraint order $d$, the heatmaps summarize
the derivative pressure
$  \rho_d(t) = \frac{\|D^d x_t\|}{r_d}$.
The left heatmap reports the mean, over traces, of the maximum pressure reached
along the trace. Values below or equal to $1$ indicate that the corresponding
$d$-th order constraint remains satisfied throughout the trace, while values
above $1$ quantify the relative magnitude of the worst violation. The right
heatmap reports the average number of time steps per trace at which
$\rho_d(t)>1$. Thus entries on or below the diagonal are expected to be zero,
whereas entries above the diagonal measure how often lower-order shields fail to
enforce higher-order constraints.

\begin{figure}[t]
    \centering
    \includegraphics[width=0.85\linewidth]{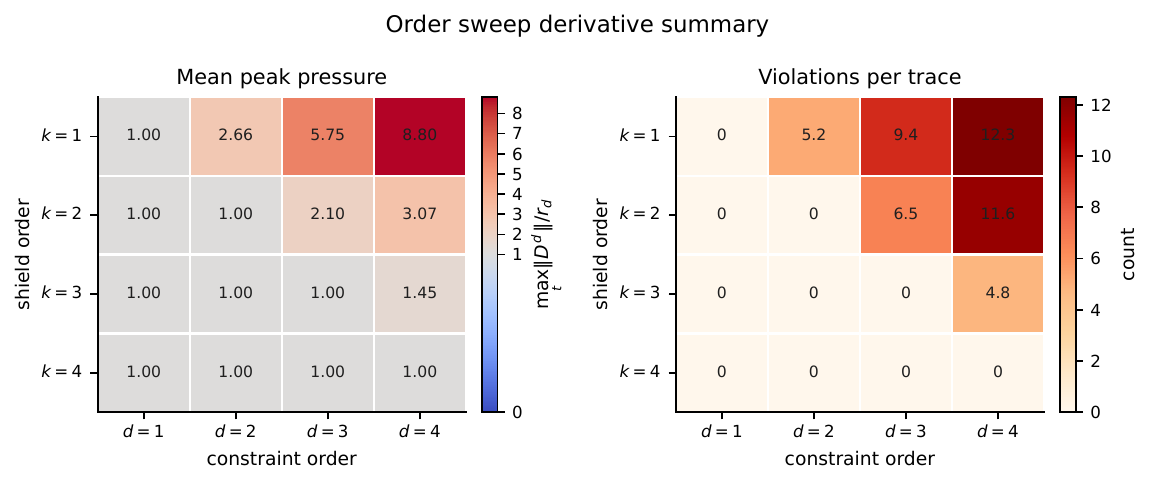}
    \caption{Heatmap of derivative pressure by shield constraint for the \texttt{Obstacle} instance.}
    \label{fig:heatmap-app-obs}
\end{figure}

\begin{figure}[t]
    \centering
    \includegraphics[width=0.85\linewidth]{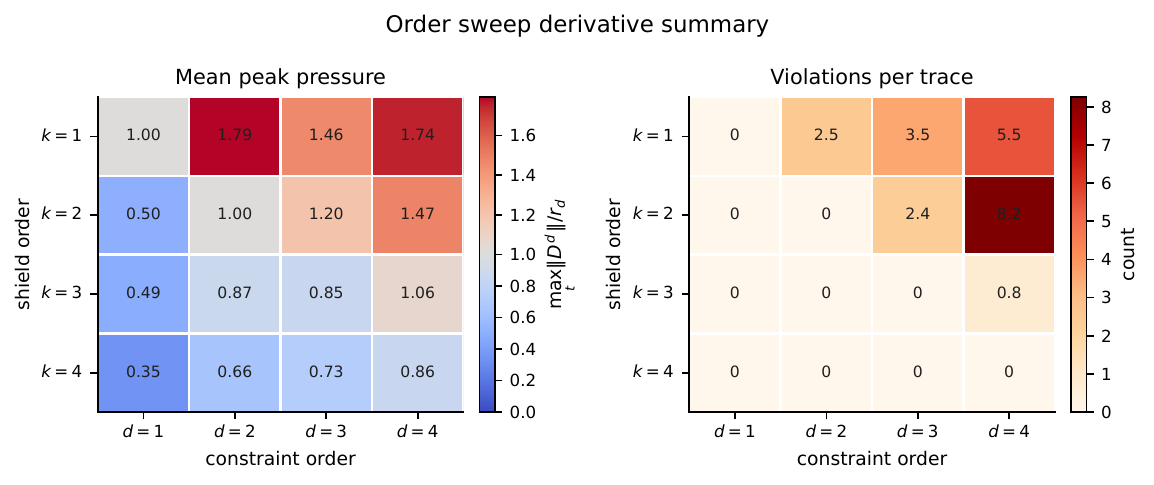}
    \caption{Heatmap of derivative pressure by shield constraint for the \texttt{Wall} instance.}
    \label{fig:heatmap-app-wall}
\end{figure}

\end{document}